\documentclass[letterpaper]{article} 
\usepackage[preprint]{aaai2027}  
\usepackage[hyphens]{url}  
\usepackage{graphicx} 
\usepackage{natbib}  
\usepackage{caption} 
\usepackage{algorithm}
\usepackage{xspace}
\usepackage{amsmath,amssymb,amsfonts,amsthm}
\usepackage{multirow}
\usepackage[table]{xcolor}
\usepackage{algpseudocode}

\usepackage{newfloat}
\usepackage{listings}
\DeclareCaptionStyle{ruled}{labelfont=normalfont,labelsep=colon,strut=off} 
\floatstyle{ruled}
\newfloat{listing}{tb}{lst}{}
\floatname{listing}{Listing}

\usepackage{booktabs}

\newcommand{\name}{\textsc{DualShield}\xspace}

\newcommand{\bestcell}[1]{\cellcolor{black!8}\textbf{#1}}
\newtheorem{theorem}{Theorem}

\title{\textit{Double Down on Defense}: Strengthening Deep Perceptual Hashes\\against Evasion Attacks without Retraining}
\author{
    Bangjie Sun\textsuperscript{\rm 1},
    Nayoung Kim\textsuperscript{\rm 2},
    Mun Choon Chan\textsuperscript{\rm 1},
    Jun Han\textsuperscript{\rm 2}
}
\affiliations{
\textsuperscript{\rm 1}National University of Singapore (NUS)\\
\textsuperscript{\rm 2}Korea Advanced Institute of Science and Technology (KAIST)\\
bangjie@nus.edu.sg, skdud@kaist.ac.kr, dcscmc@nus.edu.sg, junhan@cyphy.kaist.ac.kr
}

\begin{document}

\maketitle

\begin{abstract}
Near-duplicate image matching is essential for trust and safety, provenance verification, copyright enforcement, and large-scale visual search. To perform this task efficiently, modern online platforms increasingly use deep perceptual hashes, which map visually similar images to similar representations even after common image transformations. However, these systems remain vulnerable to adversarial manipulations designed to make a near-duplicate appear different and evade matching. We present \name, a plug-in defense that strengthens the robustness of existing deep hashes without retraining or modifying their underlying hash models. \name combines two complementary mechanisms: \emph{matching-time randomized smoothing}, which strengthens the match/non-match decision by repeatedly comparing slightly perturbed versions of the reference and query images, and \emph{publication-time hardening}, which adds an optimized imperceptible perturbation to each reference image before publication. Together, these mechanisms provide both certified and empirical robustness. For \emph{certified robustness}, \name establishes a certified $\ell_2$ radius of approximately 0.3 for near-duplicate matches, formally guaranteeing that any adversarial perturbation to the query image within this radius cannot evade near-duplicate matching. For \emph{empirical robustness}, we evaluate \name against adaptive white-box and black-box adversarial attacks, as well as image-transformation attacks, and measure its ability to maintain low attack success rates. Across eight deep perceptual hashes and three datasets, \name substantially improves robustness against evasion attacks while preserving low hash collision rates. These results demonstrate that existing deep perceptual hashes can be substantially strengthened without costly model retraining, by optimizing the hash matching procedure and imperceptibly hardening reference images before publication.
\end{abstract}


\section{Introduction}

Near-duplicate image matching is widely used in online content moderation, media provenance, digital forensics, and large-scale image search~\citep{zhou2025survey,singhi2025provenance,yandex_images,tineye_faq_howworks,du2020perceptual,liu2023robust,samanta2024smarthash,marcel2024perceptual}. For example, platforms use it to detect re-uploads of known harmful content, while provenance and forensic systems use it to identify copied or modified media. These applications increasingly rely on deep perceptual hashes, which encode image content into compact representations (i.e., hashes) that can be efficiently compared at scale. Unlike cryptographic hashes such as SHA, perceptual hashes are designed so that images with similar visual content produce hashes that are close to one another. A typical near-duplicate matching system computes a deep perceptual hash for a query image and compares it against registered reference hashes using a similarity threshold. However, an adversary can deliberately make imperceptible changes to a known image so that its hash crosses the matching boundary and no longer matches the stored reference (i.e., \textit{evasion}). For example, a user may alter previously removed harmful content to re-upload it without detection, or modify a copyrighted image to avoid being identified by copyright-monitoring systems. Therefore, defending against such evasion attacks remains an important research problem, requiring both a robust hash representation and a reliable matching rule.

\begin{figure*}[t]
  \centering
  \includegraphics[width=.95\textwidth]{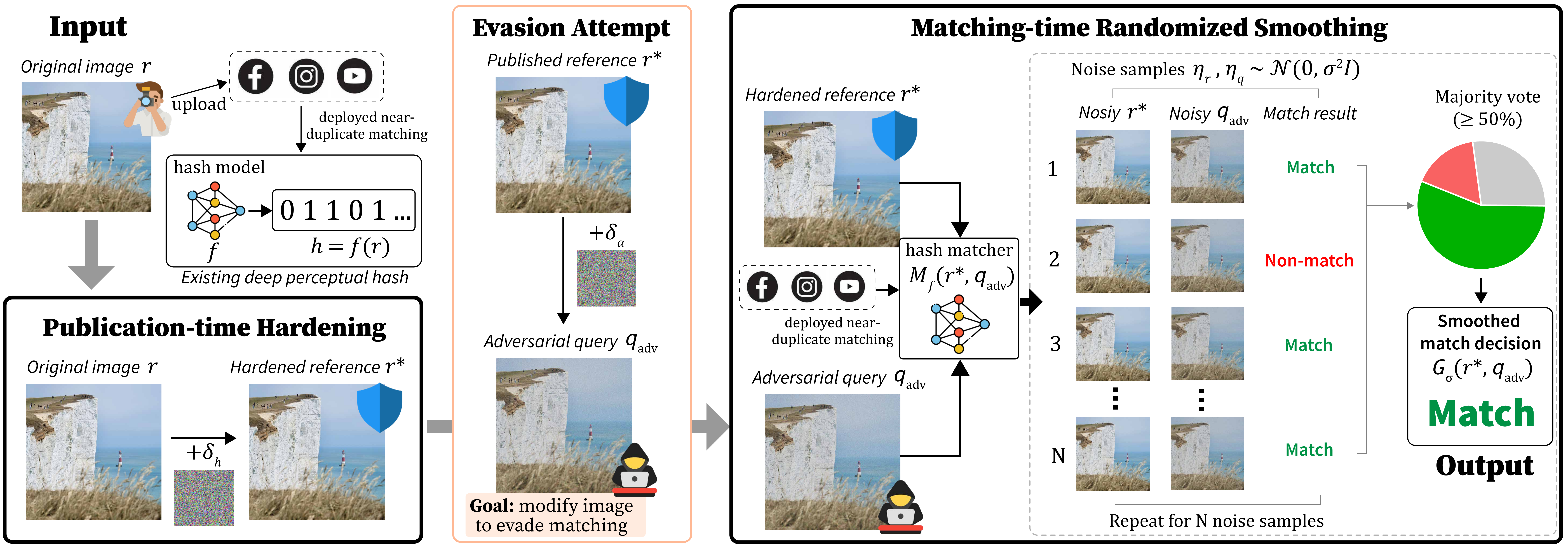}
  \caption{Consider an online platform where a published image is later used as a reference for detecting modified re-uploads. \name provides two complementary ``shields''. The first, \emph{publication-time hardening}, adds an imperceptible perturbation before publication to create a more robust reference, which is the only version accessible to an adversary. The second, \emph{matching-time randomized smoothing}, is applied when another user, who may be an adversary, uploads a query image. It repeatedly compares perturbed versions of the reference-query pair and aggregates their outcomes to produce match/non-match decisions.}
  \label{fig:overview}
\end{figure*}

Deep perceptual hashes can be evaluated under two forms of robustness: \emph{empirical robustness} and \emph{certified robustness}. Empirical robustness measures whether near-duplicate images remain correctly matched under a predefined set of image modifications. These include common photometric and geometric transformations, such as compression, resizing, cropping, color changes, and rotation, as well as adaptive white-box and black-box adversarial attacks that modify an image to cross the matching boundary~\citep{jain2022adversarial,struppek2022learning,prokos2023squint,zhang2025atkscopes}. A low evasion or attack success rate indicates strong robustness against the tested transformations and attacks. However, because only a finite set of modifications is evaluated, empirical robustness does not guarantee resistance to stronger, unseen, or future attacks.

Certified robustness provides a complementary worst-case guarantee. It guarantees that no additive adversarial perturbation within a certified $\ell_2$ radius can change a near-duplicate match into a non-match. CertPHash~\citep{yang2025certphash} takes an important step in this direction by training perceptual hash models with specialized robustness objectives and formally verifying their resistance to bounded perturbations. However, this requires constructing and robustly training a dedicated hash model, making it difficult to apply directly to already deployed hashes. This motivates our key research question: \emph{can we improve both the empirical and certified robustness of existing deep perceptual hashes without retraining or modifying their underlying models?}

To answer this question, we propose \name, a plug-in defense that strengthens existing deep perceptual hashes, as depicted in Figure~\ref{fig:overview}. Consider an online platform where a content owner first publishes an image, which the platform then stores as a reference for future matching. Later, another user may download, modify, and re-upload that image -- for example, to evade harmful-content detection or copyright enforcement. The platform must therefore determine whether the new upload is still a near-duplicate of the original reference. Before an image is published, \name applies \emph{publication-time hardening} by adding an optimized \textbf{\textit{imperceptible}} perturbation, producing a hardened reference that is more resistant to future evasion attempts. Once published, only this hardened image is accessible to the adversary and can serve as the basis for a modified re-upload. When such a query is submitted, the platform applies \emph{matching-time randomized smoothing}, repeatedly comparing slightly perturbed versions of the reference-query pair and aggregating their decisions rather than relying on a single fragile comparison.

These two mechanisms form the complementary ``shields'' of \name. Randomized smoothing makes the matching decision more stable by combining multiple slightly noisy comparisons. However, when a reference-query pair lies close to the smoothed matching boundary, these comparisons may disagree, with some predicting a match and others a non-match. Publication-time hardening addresses this uncertainty by optimizing the reference specifically for the smoothed matcher. The optimization keeps potentially modified near-duplicates on the matching side and farther from the boundary, resulting in more consistent match decisions under randomized perturbations.

We evaluate \name across eight deep perceptual hashes and three image datasets under adaptive white-box and black-box adversarial attacks, common image transformations, hash collisions, and certified robustness. Across all hashes, \name reduces the average white-box attack success rate from 98.6\% to 11.8\% and the average black-box attack success rate from 20.6\% to 1.3\%, with several hashes achieving zero or near-zero attack success. At the same time, \name largely preserves the original collision behavior and maintains robustness to common image transformations, although some hashes exhibit a trade-off between adversarial robustness and transformation robustness. Beyond empirical evaluation, \name provides certified $\ell_2$ radii of approximately 0.3 across all evaluated hashes, including hashes that do not natively support certification. Overall, these results demonstrate that \name can substantially strengthen both the certified and empirical robustness of diverse existing deep perceptual hashes without retraining or modifying their underlying models.

\section{Related Work}

\paragraph{Near-Duplicate Image Matching and Perceptual Hashing.}
Perceptual hashing maps visually similar images to nearby compact representations for efficient near-duplicate image matching at scale. Classical methods such as Meta's PDQ~\citep{dalins2019pdq} and Microsoft's PhotoDNA~\citep{microsoft_photodna} rely on hand-crafted image transforms, while many newer approaches use learned visual representations~\citep{singhidinohash,he2024hybridhash,zhao2025kalahash,xiong2021perceptual,farid2021overview,ofcom2022perceptual}. These methods are designed to remain stable under common image transformations, but such robustness does not necessarily extend to adaptive adversarial attacks.

\paragraph{Empirical Robustness and Adversarial Evasion Attacks.}
Prior work has exposed the vulnerability of perceptual hashes to adaptive adversarial attacks that evade near-duplicate matching while preserving visual fidelity~\citep{hao2021s,bhatia2022exploiting,jain2022adversarial,struppek2022learning,prokos2023squint,zhang2025atkscopes,sun2025pgdattack}. These studies include gradient-based attacks in both white-box and black-box settings, and multi-resolution perturbations that transfer across perceptual hash systems. While these evaluations are crucial for assessing empirical robustness, and prior work has sought to strengthen defenses against some of these attacks~\citep{singh2021new,madden2024robustness,li2024perceptual}, robustness against a finite set of existing attacks does not necessarily extend to stronger or previously unseen attacks.

\paragraph{Certified Robustness for Perceptual Hashing.}
Certified robustness provides a formal guarantee that no adversarial perturbation within a bounded region can cause successful evasion. The closest work is CertPHash~\citep{yang2025certphash}, which combines robust training with neural-network verification to provide provable guarantees against evasion attacks. However, this requires constructing and robustly training a dedicated hash model, making it difficult to apply directly to already deployed hashes. In contrast, \name strengthens existing deep perceptual hashes as a plug-in defense, without retraining or modifying the underlying hash model.

\paragraph{Randomized Smoothing and Proactive Image Optimization.}
Randomized smoothing provides certified $\ell_2$ robustness by aggregating model predictions under Gaussian perturbations~\citep{cohen2019certified}. \name extends this principle beyond conventional single-image classification: it applies smoothing directly to the match/non-match decision of an existing perceptual hash, providing certified robustness for the underlying hash model. \name also proactively optimizes each image at publication time to increase its distance from the smoothed matching boundary. Unlike Active Image Indexing~\citep{fernandez2022active}, which improves retrieval under approximate indexing and common transformations, \name targets adversarial evasion. Together, these mechanisms improve empirical and certified robustness while retaining compatibility with existing perceptual hashes.

\section{Problem Formulation}
\label{sec:problem}

\subsection{Deep Perceptual Hashing}

A deep perceptual hash maps an image to a compact representation such that visually similar images have similar hashes.
Let $\mathcal{X}$ denote the image space and
\begin{equation}
f:\mathcal{X}\rightarrow\mathcal{H}
\end{equation}
denote a deep perceptual hash, where $\mathcal{H}$ is the hash representation space.
Depending on the hashing scheme, $f(x)$ may be a binary hash or a continuous embedding.

Deep perceptual hashes are commonly used for near-duplicate image matching.
Given a reference image $r$, a query image $q$, a similarity function $s$, and a matching threshold $\tau$, the hash matcher is defined as
\begin{equation}
M_f(r,q)
=
\begin{cases}
1 \quad \text{(match)}, & s\bigl(f(r),f(q)\bigr) \geq \tau,\\
0 \quad \text{(non-match)}, & \text{otherwise}.
\end{cases}
\label{eq:base-matcher}
\end{equation}

\subsection{Threat Model}

\paragraph{Attacker's goal.}
We focus on \emph{evasion attacks} against deep perceptual hashes.
The attacker starts from an image $q$ that should match a registered reference $r$ and modifies it into $q_{\mathrm{adv}}$ with the goal of changing its perceptual hash sufficiently to evade matching.
Formally, the attacker produces
\begin{equation}
q_{\mathrm{adv}}=\Pi_{\mathcal{X}}(q+\delta_a),
\qquad
\|\delta_a\|_2 \leq \epsilon_a,
\label{eq:adversarial-query}
\end{equation}
where $\delta_a$ is the adversarial perturbation, $\epsilon_a$ is the attack budget, and $\Pi_{\mathcal{X}}$ clips the perturbed image to the valid image domain.
An evasion attack succeeds when the original query is correctly matched to its reference, but the attacked query is not:
\begin{equation}
M_f(r,q)=1,
\qquad
M_f(r,q_{\mathrm{adv}})=0.
\label{eq:evasion-objective}
\end{equation}
Intuitively, the attack attempts to move the hash of the query from the matching region to the non-matching region while keeping the image visually similar to its original content. This threat is particularly relevant to content moderation, where an adversary may slightly modify previously detected content and re-upload it to bypass perceptual-hash-based detection. We focus on evasion attacks rather than targeted collision attacks. However, we also measure hash collision rates to ensure that improving robustness against evasion attacks does not cause unrelated images to become incorrectly matched.

\paragraph{Attacker's capabilities.}
We consider both \emph{white-box} and \emph{black-box} attackers.
A white-box attacker has full knowledge of the deep perceptual hash $f$, the similarity function $s$, the matching threshold $\tau$, and the defense.
The attacker can therefore use this information to optimize an adversarial image specifically against the protected hash.
A black-box attacker does not have access to the internal parameters of the target hash model.
The attacker instead constructs adversarial images without directly accessing the target model's gradients, for example through queries or transferable attacks.
We assume that the attacker cannot modify the platform's underlying hash model, matching threshold, or defense mechanism. The attacker controls only the image subsequently submitted for matching. Any randomness introduced by \name during matching is freshly sampled by the defender and cannot be controlled by the attacker.

\subsection{Robustness Goals}

We aim to improve two complementary properties of an existing deep perceptual hash: \emph{empirical robustness} and \emph{certified robustness}.

\paragraph{Empirical robustness.}
Empirical robustness measures whether a deep perceptual hash continues to produce the intended matching decision under concrete image modifications. We evaluate empirical robustness under two types of modifications.
First, we consider adaptive white-box and black-box adversarial attacks that deliberately attempt to move the query across the hash matching boundary. We measure their effectiveness using the \emph{attack success rate}, where a lower attack success rate indicates stronger robustness.
Second, we consider common image transformations.
Let
\begin{equation}
q_{\mathrm{tran}}=T(q),
\qquad
T\in\mathcal{T},
\label{eq:image-transformation}
\end{equation}
where $\mathcal{T}$ includes photometric and geometric transformations such as compression, resizing, cropping, color changes, blur, and rotation.
A robust perceptual hash should continue to produce a matching representation after these transformations.
Both evaluations are empirical: they demonstrate robustness against the attacks and transformations that are explicitly tested, but they do not guarantee robustness against every possible or future attack.

\paragraph{Certified robustness.}
Certified robustness provides a formal guarantee that a query cannot evade the protected deep perceptual hash within a specified perturbation region.
Let
\begin{equation}
S_f(r,q)\in\{0,1\}
\end{equation}
denote the protected hash matching decision produced by \name using an existing deep perceptual hash $f$.
For a reference-query pair classified as a match, we compute a certified $\ell_2$ radius $R_2$ such that
\begin{equation}
S_f(r,q+\delta)=1,
\qquad
\forall\,\delta \text{ such that } \|\delta\|_2 < R_2.
\label{eq:certified-goal}
\end{equation}
Thus, any adversarial perturbation to the query within the certified $\ell_2$ radius is formally guaranteed to preserve the near-duplicate match.
The certified radius $R_2$ is a provable lower bound on the robustness of the protected deep perceptual hash.
The hash may remain robust beyond this radius, but such robustness is not formally guaranteed by the certificate. 
Our certificate is \emph{query-only}: the registered reference is treated as fixed, while the attacker modifies the subsequently submitted query. The certificate does not certify arbitrary photometric or geometric transformations, which we evaluate empirically instead.

\paragraph{Objective.}
Given an existing deep perceptual hash $f$, our objective is to improve its robustness without retraining or modifying $f$ itself.
Specifically, we seek to (1) reduce attack success rates under adaptive white-box and black-box adversarial attacks, as well as image transformations, and (2) provide a large certified $\ell_2$ radius within which adversarial evasion is provably impossible.

\section{Methodology}
\label{sec:method}

\subsection{Overview}

\name strengthens an existing deep perceptual hash without retraining or modifying the underlying hash model. As depicted in Figure~\ref{fig:overview}, it introduces two complementary mechanisms at different stages of the image lifecycle:
\emph{publication-time hardening} and \emph{matching-time randomized smoothing}.
Consider an image $r$ that is published online and registered as a reference for detecting future near-duplicates.
Before publication, \name slightly modifies $r$ with an optimized imperceptible perturbation, producing a hardened image $r^\star$.
This modification is performed only once.
Later, an adversary may obtain the published image, modify it, and re-upload the resulting query $q$ in an attempt to evade near-duplicate matching.
Instead of relying on a single hash comparison, \name performs randomized evaluations around the reference-query pair and aggregates their match decisions.
The two mechanisms address complementary sources of vulnerability. Publication-time hardening places the reference in a more robust region of the hash matching space, making subsequent near-duplicates harder to push across the smoothed matching boundary. Matching-time randomized smoothing reduces the sensitivity of the final match/non-match decision to small adversarial perturbations and enables a formal $\ell_2$ robustness certificate.

\subsection{Publication-Time Hardening}
\label{sec:hardening}

Reference images can vary substantially in their robustness against evasion attacks. For some, only a small perturbation to a near-duplicate query is needed to cross the hash matching boundary, even under randomized smoothing. We therefore proactively optimize the reference image before it is published.

Given an original image $r$, we construct
\begin{equation}
r^\star
=
\Pi_{\mathcal{X}}(r+\delta_h),
\label{eq:publication-hardening}
\end{equation}
where $\delta_h$ is constrained to remain imperceptible and
$\Pi_{\mathcal{X}}$ clips the result to the valid image domain.

Our objective is to ensure that, after hardening, benign variants of an image match its reference with higher confidence, while unrelated images remain non-matches. We therefore optimize $\delta_h$ to enlarge the smoothed matching margin for positive pairs, namely a reference image and its benign variant, while preserving separation from negative pairs, namely the reference and unrelated images:
\begin{equation}
\delta_h^\star
=
\arg\max_{\delta_h\in\mathcal{C}_h}
\left[
\mathcal{J}_{\mathrm{pos}}(\delta_h)
-
\lambda_{\mathrm{neg}}
\mathcal{J}_{\mathrm{neg}}(\delta_h)
\right],
\label{eq:hardening-main}
\end{equation}
where $\mathcal{C}_h$ constrains the modification to be imperceptible,
$\mathcal{J}_{\mathrm{pos}}$ encourages confident matching with near-duplicates,
and $\mathcal{J}_{\mathrm{neg}}$ prevents unrelated images from becoming matches.
Importantly, hardening changes only the image being published.
The underlying deep perceptual hash remains unchanged.
Once generated, $r^\star$ is published and registered as the reference for subsequent matching.

Thus, an attacker who downloads $r^\star$ sees the hardened image itself. The attacker may still modify this image and attempt to re-upload it, but now starts from a reference that has been optimized to be less vulnerable to evasion attacks.

\subsection{Matching-Time Randomized Smoothing}
\label{sec:smoothing}

Publication-time hardening moves the reference image away from locally vulnerable regions, but a conventional deep perceptual hash still relies on a single deterministic comparison, making its decision potentially fragile and unreliable. A small perturbation may therefore shift the query representation across the matching boundary. Beyond increasing the reference image’s minimum distance to this boundary, \name smooths the decision rule by aggregating predictions over Gaussian-perturbed image pairs. This reduces sensitivity to local boundary irregularities and provides a statistically reliable decision that remains unchanged within a certifiable perturbation radius.

For a hardened reference $r^\star$ and query $q$, we independently sample Gaussian perturbations
\begin{equation}
\eta_r,\eta_q
\sim
\mathcal{N}(0,\sigma^2 I),
\end{equation}
and repeatedly evaluate the original hash matcher on
\begin{equation}
\left(
\Pi_{\mathcal{X}}(r^\star+\eta_r),
\Pi_{\mathcal{X}}(q+\eta_q)
\right).
\end{equation}
For each match outcome $c\in\{0,1\}$, define
\begin{equation}
p_c(r^\star,q)
=
\Pr
\left[
M_f\!\left(
\Pi_{\mathcal{X}}(r^\star+\eta_r),
\Pi_{\mathcal{X}}(q+\eta_q)
\right)
=c
\right].
\label{eq:smoothed-prob-main}
\end{equation}
The smoothed matcher returns the most probable decision:
\begin{equation}
G_\sigma(r^\star,q)
=
\arg\max_{c\in\{0,1\}}
p_c(r^\star,q).
\label{eq:smoothed-matcher-main}
\end{equation}
Intuitively, a query is no longer considered a match because of one potentially fragile hash comparison.
Instead, it must consistently remain a match across a neighborhood of randomized comparisons.
This creates a smoother and more stable match/non-match decision around the reference-query pair. In practice, we estimate $p_c$ using Monte Carlo sampling and compute a statistically valid lower confidence bound $\underline{p}_A$ for the probability of the match/non-match decision. The complete estimation and confidence-bound procedure is provided in the supplementary material.

\subsection{Certified Robustness}
\label{sec:certification}

Randomized smoothing converts the smoothed match probability into a formal robustness certificate, while hardening the reference with an optimized imperceptible perturbation increases the certified radius achievable by smoothing alone.
Suppose the predicted class has a valid lower confidence bound
$\underline{p}_A>1/2$.
We certify an $\ell_2$ radius
\begin{equation}
R_2
=
\sigma \Phi^{-1}(\underline{p}_A),
\label{eq:certified-radius-main}
\end{equation}
where $\Phi^{-1}$ is the inverse CDF of the standard normal distribution.
Our threat model fixes the registered reference $r^\star$ and allows the attacker to modify only the subsequently submitted query.
For a reference-query pair classified as a match, with confidence at least $1-\alpha$,
\begin{equation}
G_\sigma(r^\star,q+\delta)
=
G_\sigma(r^\star,q)
\qquad
\forall\,\|\delta\|_2<R_2.
\label{eq:query-guarantee-main}
\end{equation}
Therefore, if $G_\sigma(r^\star,q)=1$, no adversarial perturbation within the certified $\ell_2$ radius can change the decision from \emph{match} to \emph{non-match}.
The guarantee holds for every additive perturbation inside the certified region, regardless of how the attacker constructs it.

Intuitively, randomized smoothing makes the match decision stable over a neighborhood around $(r^\star,q)$.
If the match class remains sufficiently dominant under Gaussian perturbations, then a bounded shift of the query cannot change the majority decision.
A larger $\underline{p}_A$ therefore yields a larger certified radius.
Publication-time hardening complements this process by improving the local matching margin, which can increase the resulting certified robustness.
The full theorem, proof, confidence-bound construction, and certification algorithm are provided in the supplementary material.

\begin{table*}[tb]
\centering
\small
\setlength{\tabcolsep}{2.4pt}
\renewcommand{\arraystretch}{1.10}

\begin{tabular*}{\textwidth}{
@{\extracolsep{\fill}}
l
cc
cc
cc
cc
cc
}
\toprule

\multirow{3}{*}{\textbf{Hash}}
& \multicolumn{6}{c}{\textbf{Empirical Robustness} ($\%$, $\downarrow$)}
& \multicolumn{2}{c}{\textbf{Certified Robustness} ($\uparrow$)}
& \multicolumn{2}{c}{\textbf{Utility} ($\%$, $\downarrow$)}
\\

\cmidrule(lr){2-7}
\cmidrule(lr){8-9}
\cmidrule(lr){10-11}

& \multicolumn{2}{c}{White-box ASR}
& \multicolumn{2}{c}{Black-box ASR}
& \multicolumn{2}{c}{Transformation ASR}
& \multicolumn{2}{c}{$\overline{R}_2$}
& \multicolumn{2}{c}{Collision Rate}
\\

\cmidrule(lr){2-3}
\cmidrule(lr){4-5}
\cmidrule(lr){6-7}
\cmidrule(lr){8-9}
\cmidrule(lr){10-11}

& Original & Ours
& Original & Ours
& Original & Ours
& Original & Ours
& Original & Ours
\\

\midrule

NeuralHash
& 100.0 & \textbf{20.6}
& 35.2  & \textbf{4.4}
& \textbf{0.0} & 7.7
& -- & \textbf{0.2987}
& \textbf{0.0} & \textbf{0.0}
\\

DINOHash
& 100.0 & \textbf{14.0}
& 15.6  & \textbf{0.0}
& \textbf{0.5} & 2.8
& -- & \textbf{0.2993}
& \textbf{0.0} & \textbf{0.0}
\\

HybridHash
& 100.0 & \textbf{1.0}
& 22.0  & \textbf{0.0}
& \textbf{0.0} & 0.5
& -- & \textbf{0.2993}
& \textbf{0.1} & 0.3
\\

SSCD
& 100.0 & \textbf{3.3}
& 73.1  & \textbf{5.9}
& \textbf{0.5} & 23.6
& -- & \textbf{0.2993}
& \textbf{0.0} & \textbf{0.0}
\\

SimDINO
& 100.0 & \textbf{43.7}
& 16.6  & \textbf{0.0}
& \textbf{0.1} & 8.9
& -- & \textbf{0.2993}
& \textbf{0.0} & \textbf{0.0}
\\

ViT2Hash
& 100.0 & \textbf{12.0}
& 2.3   & \textbf{0.0}
& \textbf{0.1} & 3.9
& -- & \textbf{0.2993}
& 0.1 & \textbf{0.0}
\\

C-PDNA
& 94.2 & \textbf{0.0}
& \textbf{0.0} & \textbf{0.0}
& 0.8 & \textbf{0.3}
& 0.2968 & \textbf{0.2993}
& 5.9 & \textbf{5.7}
\\

C-PDQ
& 94.7 & \textbf{0.0}
& \textbf{0.0} & \textbf{0.0}
& \textbf{0.0} & \textbf{0.0}
& 0.1282 & \textbf{0.2993}
& \textbf{100.0} & \textbf{100.0}
\\

\bottomrule
\end{tabular*}
\caption{
Overall evaluation across eight perceptual hashes.
\emph{Empirical robustness} is evaluated by white-box and black-box attack success rate (ASR) and transformation ASR (all in \%, $\downarrow$). White-box and black-box ASR are averaged over attack budgets 40, 90, and 180, while transformation ASR is averaged across tested transformations such as compression, cropping and rotation.
\emph{Certified robustness} is reported as the mean query-only certified $\ell_2$ radius $\overline{R}_2$ ($\uparrow$).
\emph{Utility} is measured by the collision rate (\%, $\downarrow$) on unrelated image pairs.
A dash indicates that the original hash does not provide a directly comparable native certificate.
}
\label{tab:overall-summary}
\end{table*}

\section{Evaluation}
\label{sec:evaluation}

We evaluate whether \name can strengthen existing deep perceptual hashes along the two robustness goals:
(1)~\emph{empirical robustness} against concrete adversarial attacks and image transformations, and (2)~\emph{certified robustness} against $\ell_2$-norm-bounded adversarial perturbations.
We further examine whether the robustness gains generalize across hash families and whether they come at the cost of increased collisions or degraded visual qualities.

\subsection{Experimental Setup}

\paragraph{Hash models.}
We evaluate eight perceptual-hash systems from diverse model families: NeuralHash~\citep{struppek2022learning}, DINOHash~\citep{singhidinohash}, HybridHash~\citep{he2024hybridhash}, SSCD~\citep{pizzi2022sscd}, SimDINO~\citep{moummad2026hashing}, ViT2Hash~\citep{gong2201vit2hash}, C-PDNA, and C-PDQ~\citep{yang2025certphash}. Together, they cover convolutional and transformer-based deep hashing models, learned image descriptors adapted for perceptual matching, and perceptual-hash models with certified robustness guarantees.

\paragraph{Datasets.}
We evaluate each hash on ImageNet~\citep{imagenet}, MS-COCO~\citep{lin2014microsoft}, and Stable-Diffusion images~\citep{rombach2022highresolution}. ImageNet and MS-COCO cover diverse natural images, while Stable-Diffusion provides generated images relevant to provenance and online content-sharing applications.

\paragraph{Compared variants.}
We compare three variants.
\emph{Original} is the original perceptual hash and its native matching rule.
\emph{Smoothing} applies only matching-time randomized smoothing.
\emph{Ours} combines matching-time randomized smoothing with publication-time hardening.
This comparison allows us to highlight the effect of publication-time hardening beyond randomized smoothing alone.

\paragraph{Metrics.}
For empirical adversarial robustness, we report \textit{attack success rate} (ASR) under adaptive white-box and black-box adversarial attacks at three increasing attack budgets in terms of $\ell_2$ norm: 40, 90, and 180. Lower ASR indicates stronger robustness. For common photometric and geometric image transformations, we report the transformation ASR as the fraction of transformed near-duplicate queries that are no longer detected as matches. We also measure \textit{collision rates} on unrelated images to verify that robustness is not achieved simply by making the matcher more permissive.
For certified robustness, we report the mean query-only certified $\ell_2$ radius $\overline{R}_2$.
A larger radius means that a larger region around the query is formally guaranteed to preserve the match/non-match decision.
Detailed attack settings, transformation parameters, per-budget results, and certification statistics are provided in the supplementary material.

\subsection{Overall Robustness}

Table~\ref{tab:overall-summary} summarizes the results across all eight perceptual hashes.
The main observation is consistent across model families: \name substantially improves resistance to adversarial evasion attacks while simultaneously providing a formal robustness certificate.

\paragraph{\name substantially reduces success rates of adversarial evasion attacks.}
The original hashes are highly vulnerable to adaptive white-box attacks: averaged across all hashes and attack budgets, their ASR is 98.6\%.
\name reduces this to 11.8\%, an absolute reduction of 86.8 percentage points.
The improvement is observed across all eight hashes, despite their substantially different representations and matching rules.
For HybridHash and the two CertPHash variants (i.e., C-PDNA and C-PDQ), the average white-box ASR is at most 1\% after applying \name.
The same trend holds under black-box attacks. Average ASR decreases from 20.6\% for the original hashes to 1.3\% with \name. Six of the eight hashes achieve zero observed black-box ASR across all three tested budgets; only NeuralHash and SSCD exhibit successful attacks at the largest budget. These results support our claim that \name provides a broadly applicable robustness enhancement rather than relying on properties of a particular hash architecture.

\paragraph{Robustness remains strong at practical attack budgets.}
\name is particularly effective under small to moderate attack budgets. At budget 40, the average white-box ASR is only 0.09\% across all eight hashes, with seven achieving 0\% ASR. At budget 90, the average remains low at 4.6\%. At the largest budget of 180, the average ASR increases to 30.8\%, with NeuralHash and SimDINO becoming more vulnerable.
Importantly, stronger attacks introduce substantially greater visual distortion. As the attack budget increases from 40 to 180, Structural Similarity Index Measure (SSIM) decreases sharply while Learned Perceptual Image Patch Similarity (LPIPS) and Fréchet Inception Distance (FID) increase across all hashes, indicating that successful evasion at larger budgets comes at the cost of increasingly degraded image quality. Thus, while \name does not prevent evasion under arbitrarily large perturbations, the attacker must trade visual fidelity for attack success. Detailed per-budget results are provided in the supplementary material.

\paragraph{\name balances adversarial and transformation robustness.}
We further evaluate whether improving resistance to adversarial evasion comes at the cost of robustness to common photometric and geometric transformations. For seven of the eight hashes, \name keeps the transformation evasion rate below 10\%, and the degradation is small for DINOHash, HybridHash, ViT2Hash, C-PDNA, and C-PDQ. In particular, C-PDNA and C-PDQ show little to no loss in transformation robustness while achieving strong adversarial robustness with \name.
The trade-off becomes more apparent for hashes that require stronger intervention to resist adversarial evasion. SSCD, for example, achieves a large reduction in adversarial ASR but its transformation evasion rate increases from 0.5\% to 23.6\%, mainly under cropping and rotation; SimDINO shows a smaller version of the same effect, increasing from 0.1\% to 8.9\%. These results suggest that adversarial and transformation robustness must be balanced in a model-dependent manner: when the underlying hash is already easier to defend, \name can substantially improve adversarial robustness with little compromise in transformation robustness, whereas more vulnerable hashes may require stronger hardening and incur a larger trade-off.

\paragraph{From empirical resistance to formal guarantees.}
Unlike empirical attack evaluation, certification does not depend on a particular attack algorithm. Across the eight evaluated hashes, \name achieves a mean certified radius of approximately $0.3$. The empirical and certified results capture complementary notions of robustness. Empirically, the hashes often withstand perturbations much larger than their certified $\ell_2$ radii, showing that current white-box and black-box attacks may fail well before reaching the true worst-case vulnerability of the matcher. Certification is intentionally more conservative: it provides a formal lower bound within which \emph{no} possible attack can change the match decision, regardless of how the perturbation is constructed. This is important because existing attacks may not yet exploit the most vulnerable directions of the hash representation and matching rule, while stronger future attacks potentially could. Improving certified robustness therefore complements empirical evaluation by strengthening the guaranteed worst-case behavior of the system. \name provides such guarantees for hashes that do not natively support certification, while also improving or maintaining the certified bounds of CertPHash variants (i.e., C-PDNA and C-PDQ).

\begin{table}[tb]
\centering
\small
\setlength{\tabcolsep}{4pt}
\renewcommand{\arraystretch}{1.08}

\begin{tabular}{lccc}
\toprule
\textbf{Variant}
& White-box $\downarrow$
& Black-box $\downarrow$
& $\boldsymbol{\overline{R}_2}$ $\uparrow$ \\
\midrule
Original
& 98.6
& 20.6
& -- \\

+ Smoothing
& 43.8
& 18.0
& 0.252 \\

+ Hardening (Ours)
& \textbf{11.8}
& \textbf{1.3}
& \textbf{0.299} \\
\bottomrule
\end{tabular}
\caption{
Ablation of \name, averaged across all eight perceptual hashes.
ASR is averaged over attack budgets 40, 90, and 180.
Lower ASR and higher $\overline{R}_2$ are better.
}
\label{tab:ablation-summary}
\end{table}

\subsection{Why Are Both ``Shields'' Necessary?}

To understand the contribution of publication-time hardening, we compare full \name against matching-time randomized smoothing alone. Table~\ref{tab:ablation-summary} aggregates the results across all eight hashes and attack budgets.
Matching-time smoothing alone already improves white-box robustness, reducing average ASR from 98.6\% to 43.8\%.
However, this protection is highly dependent on the reference image and underlying hash. For example, smoothing alone remains vulnerable at larger budgets for NeuralHash, SSCD, and SimDINO.
Publication-time hardening addresses these fragile cases. When combined with smoothing, average white-box ASR further decreases from 43.8\% to 11.8\%, while black-box ASR decreases from 18.0\% to 1.3\%. The mean certified radius also increases from $0.252$ to $0.299$.
These results support the intuition behind \name's two-stage design.
Randomized smoothing makes the matching decision locally more stable and enables certification, while publication-time hardening improves the operating point of each individual reference before an attack occurs. Neither mechanism alone explains the full robustness gain.

\begin{table}[tb]
\centering
\small
\setlength{\tabcolsep}{4.2pt}
\renewcommand{\arraystretch}{1.04}
\begin{tabular}{cccc}
\toprule
Hash & SSIM $\uparrow$ & LPIPS $\downarrow$ & FID $\downarrow$ \\
\midrule
DINO    & 0.700 & 0.271 & 57.7 \\
Hybrid  & 0.729 & 0.233 & 47.1 \\
C-PDNA  & 0.922 & 0.079 & 13.2 \\
C-PDQ   & \textbf{0.987} & \textbf{0.029} & \textbf{4.6} \\
\bottomrule
\end{tabular}
\caption{Visual similarity between original and hardened images for four representative hashes, averaged over ImageNet, MS-COCO, and Stable-Diffusion. Higher SSIM and lower LPIPS/FID indicate better visual preservation.}
\label{tab:main-hardening-quality}
\end{table}

\subsection{Hash Collision and Visual Fidelity}

Improving robustness against evasion should not make a perceptual hash less useful for its original purpose.
We therefore examine two potential side effects of \name: collisions between unrelated images, and visual changes introduced by publication-time hardening.

\paragraph{Collision behavior.}
A stronger defense should not make the matcher overly permissive and cause unrelated images to match. As shown in Table~\ref{tab:overall-summary}, for most hashes, \name leaves collision rates unchanged or near zero.
NeuralHash, DINOHash, SSCD, and SimDINO remain at 0\%, while ViT2Hash decreases from 0.1\% to 0\%. HybridHash increases slightly from 0.1\% to 0.3\%, and C-PDNA remains approximately unchanged.
C-PDQ is an important exception: it already exhibits a 100\% collision rate under the evaluated configuration, and \name does not correct this pre-existing limitation. This reflects the intended scope of our plug-in design: \name strengthens an existing hash against evasion, but does not attempt to repair fundamental utility limitations of the underlying hash itself.

\paragraph{Visual quality after publication-time hardening.}
Publication-time hardening modifies the image before it is released, so the resulting image should remain visually close to the original.
Table~\ref{tab:main-hardening-quality} reports SSIM, LPIPS, and FID between the original and hardened images for four representative hashes, averaged across the three datasets.
Higher SSIM and lower LPIPS and FID indicate better visual preservation.
The amount of change depends strongly on the underlying hash.
C-PDQ and C-PDNA preserve the original appearance particularly well, whereas DINOHash and HybridHash require larger modifications at the evaluated hardening strength.
Figure~\ref{fig:main-hardening-examples} illustrates this difference on the same source image:
C-PDQ is nearly unchanged, C-PDNA introduces only mild changes, while DINOHash and HybridHash exhibit more visible texture differences. This suggests another practical trade-off between robustness and visual fidelity. When preserving appearance is more important, the hardening strength can be reduced at the cost of a smaller robustness gain.

\begin{figure}[tb]
  \centering
  \setlength{\tabcolsep}{1pt}
  \renewcommand{\arraystretch}{1.0}

  \begin{tabular}{@{}lcccc@{}}
    & \scriptsize\textbf{DINO}
    & \scriptsize\textbf{Hybrid}
    & \scriptsize\textbf{C-PDNA}
    & \scriptsize\textbf{C-PDQ}
    \\

    \scriptsize\textbf{Original}
    & \includegraphics[width=0.185\columnwidth]{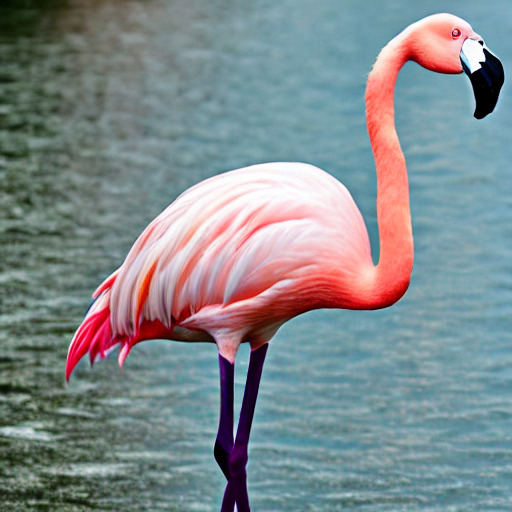}
    & \includegraphics[width=0.185\columnwidth]{Figures/hardening_examples/comparison_original.jpg}
    & \includegraphics[width=0.185\columnwidth]{Figures/hardening_examples/comparison_original.jpg}
    & \includegraphics[width=0.185\columnwidth]{Figures/hardening_examples/comparison_original.jpg}
    \\[-1pt]

    \scriptsize\textbf{Hardened}
    & \includegraphics[width=0.185\columnwidth]{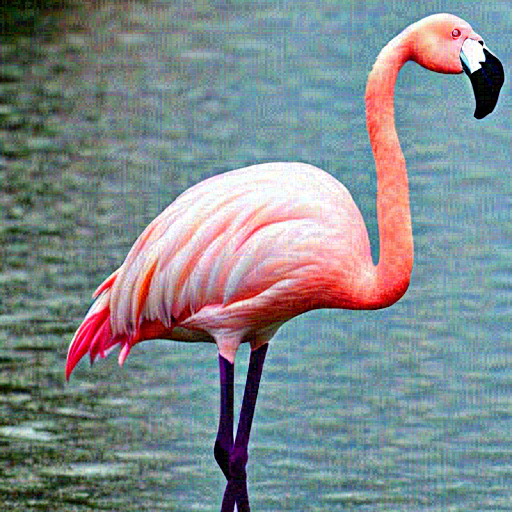}
    & \includegraphics[width=0.185\columnwidth]{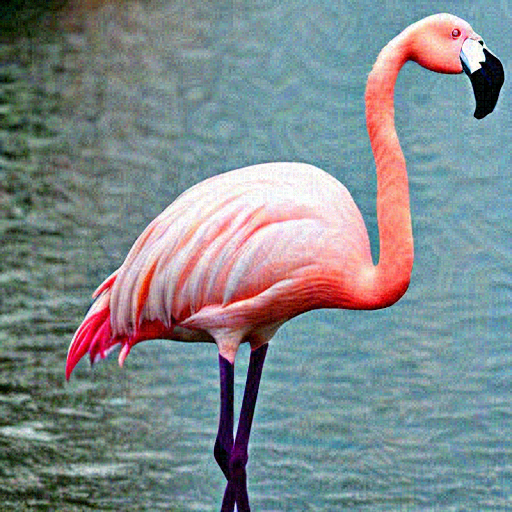}
    & \includegraphics[width=0.185\columnwidth]{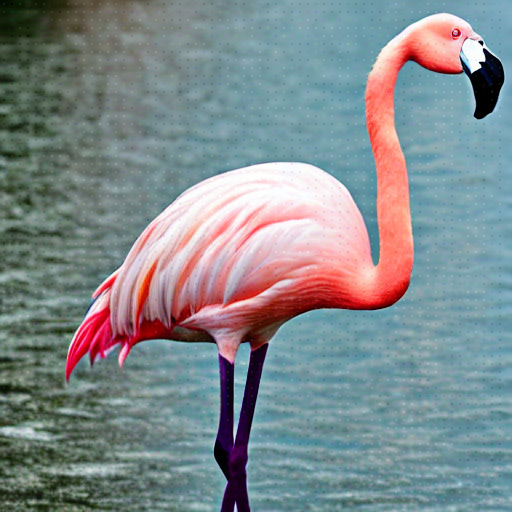}
    & \includegraphics[width=0.185\columnwidth]{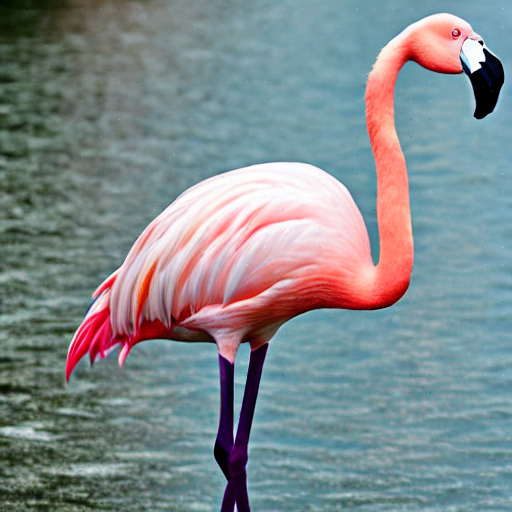}
  \end{tabular}

  \caption{
  Visual comparison of publication-time hardening on the same source image.
  Visual changes are model-dependent: C-PDQ and C-PDNA preserve the original appearance more closely than DINOHash and HybridHash.
  }
  \label{fig:main-hardening-examples}
\end{figure}

\section{Discussion and Conclusion}

\name shows that robust deep perceptual hashing does not necessarily require redesigning or retraining the underlying hash model. Instead, robustness can be added around an existing hash by jointly protecting the published reference and strengthening the matching decision. This suggests a broader design principle: \emph{robustness can be treated as a plug-in layer} that augments deployed visual matching systems without changing their core representations. At the same time, several open challenges remain. Future work could extend certification beyond additive $\ell_2$ perturbations to structured image transformations, reduce the computational cost of randomized smoothing, and study stronger adaptive or collision-based attacks. More broadly, we hope \name motivates new approaches for improving the robustness of existing perceptual hashes after deployment, making certified and adversarially robust matching easier to adopt in real-world systems.

\bibliography{aaai2027}

\appendix
\setcounter{secnumdepth}{1}

\renewcommand{\thefigure}{\Alph{section}.\arabic{figure}}
\renewcommand{\thetable}{\Alph{section}.\arabic{table}}
\renewcommand{\theequation}{\Alph{section}.\arabic{equation}}
\renewcommand{\thealgorithm}{\Alph{section}.\arabic{algorithm}}

\makeatletter
\@addtoreset{figure}{section}
\@addtoreset{table}{section}
\@addtoreset{equation}{section}
\@addtoreset{algorithm}{section}
\makeatother

\section{Detailed Methodology and Theoretical Guarantee}
\label{sec:supp-method}

This section provides the formal definitions, certification procedure,
optimization details, algorithms, and proof underlying \name.
The main paper presents the same method at a higher level for clarity.

\subsection{Formal Definition of the Smoothed Hash Matcher}

Let
\[
M_f:\mathcal{X}\times\mathcal{X}\rightarrow\{0,1\}
\]
denote the deterministic matching rule induced by an existing deep perceptual hash $f$.
For a hardened reference $r^\star$ and query $q$, define the concatenated pair
\begin{equation}
z
=
\begin{bmatrix}
r^\star\\
q
\end{bmatrix}
\in\mathbb{R}^{2n}.
\label{eq:supp-pair-input}
\end{equation}

We define the corresponding lifted matcher
\begin{equation}
\overline{M}_f(z)
=
M_f\!\left(
\Pi_{\mathcal{X}}(r^\star),
\Pi_{\mathcal{X}}(q)
\right),
\label{eq:supp-lifted-matcher}
\end{equation}
where deterministic preprocessing and clipping are treated as part of the base matcher.

At matching time, we add independent Gaussian noise to the reference and query:
\begin{equation}
\eta
=
\begin{bmatrix}
\eta_r\\
\eta_q
\end{bmatrix}
\sim
\mathcal{N}(0,\sigma^2I_{2n}),
\qquad
\eta_r,\eta_q
\stackrel{\mathrm{i.i.d.}}{\sim}
\mathcal{N}(0,\sigma^2I_n).
\label{eq:supp-pair-noise}
\end{equation}

For each class $c\in\{0,1\}$, define
\begin{equation}
p_c(z)
=
\Pr_{\eta}
\left[
\overline{M}_f(z+\eta)=c
\right].
\label{eq:supp-smoothed-probability}
\end{equation}

The ideal Gaussian-smoothed matcher is
\begin{equation}
G_\sigma(z)
=
\arg\max_{c\in\{0,1\}}p_c(z).
\label{eq:supp-smoothed-matcher}
\end{equation}

Because $M_f$ may be any deterministic deep perceptual hash matcher,
this construction does not require retraining or modifying the underlying hash model.

\subsection{Monte Carlo Estimation and Statistical Certification}

The probabilities in Equation~\eqref{eq:supp-smoothed-probability}
cannot generally be computed exactly.
We estimate them using Monte Carlo sampling.

Following standard randomized smoothing, we use independent samples for
class selection and certification.

First, draw $N_0$ independent samples
\[
\eta^{(i)}
\sim
\mathcal{N}(0,\sigma^2I_{2n}),
\qquad
i=1,\ldots,N_0,
\]
and select the most frequent class:
\begin{equation}
\hat{c}_A
=
\arg\max_{c\in\{0,1\}}
\sum_{i=1}^{N_0}
\mathbb{I}
\left[
\overline{M}_f(z+\eta^{(i)})=c
\right].
\label{eq:supp-class-selection}
\end{equation}

If the indicator notation $\mathbb{I}$ is not available in the document class,
the count can equivalently be written as the cardinality of the corresponding
sample set.

Next, draw an independent set of $N$ samples
\[
\tilde{\eta}^{(i)}
\sim
\mathcal{N}(0,\sigma^2I_{2n}),
\qquad
i=1,\ldots,N,
\]
and count
\begin{equation}
K_A
=
\left|
\left\{
i:
\overline{M}_f(z+\tilde{\eta}^{(i)})
=
\hat{c}_A
\right\}
\right|.
\label{eq:supp-cert-count}
\end{equation}

We compute a one-sided Clopper-Pearson lower confidence bound
$\underline{p}_A$ for the probability of $\hat{c}_A$:
\begin{equation}
\underline{p}_A
=
\begin{cases}
0,
&
K_A=0,
\\[3pt]
F^{-1}_{\mathrm{Beta}
(K_A,N-K_A+1)}(\alpha),
&
K_A>0,
\end{cases}
\label{eq:supp-clopper-pearson}
\end{equation}
where $F^{-1}_{\mathrm{Beta}(a,b)}$ denotes the inverse CDF of a
$\mathrm{Beta}(a,b)$ random variable.

With probability at least $1-\alpha$ over the certification samples,
\begin{equation}
p_{\hat{c}_A}(z)
\geq
\underline{p}_A.
\label{eq:supp-confidence-event}
\end{equation}

Certification is returned only when
\begin{equation}
\underline{p}_A>\frac{1}{2}.
\label{eq:supp-majority-condition}
\end{equation}

\subsection{Query-Only Certified $\ell_2$ Radius}

Since matching is binary,
\begin{equation}
p_{1-\hat{c}_A}(z)
=
1-p_{\hat{c}_A}(z).
\end{equation}

When $\underline{p}_A>1/2$, the Gaussian randomized-smoothing radius is
\begin{equation}
R_2(z)
=
\sigma\Phi^{-1}(\underline{p}_A).
\label{eq:supp-l2-radius}
\end{equation}

Although smoothing is defined over the reference-query pair, our attacker
perturbs only the query.
A query perturbation $\delta_q$ corresponds to the pairwise displacement
\begin{equation}
\Delta
=
\begin{bmatrix}
0\\
\delta_q
\end{bmatrix}.
\label{eq:supp-query-shift}
\end{equation}

Therefore,
\begin{equation}
\|\Delta\|_2
=
\|\delta_q\|_2.
\label{eq:supp-query-equality}
\end{equation}

The native pairwise $\ell_2$ certificate thus directly yields a query-only
$\ell_2$ certificate with the same radius $R_2$.

\subsection{Theoretical Guarantee}

\begin{theorem}[Query-only $\ell_2$ robustness]
\label{thm:supp-query-l2}
Let
$\overline{M}_f:\mathbb{R}^{2n}\rightarrow\{0,1\}$
be the deterministic lifted hash matcher in
Equation~\eqref{eq:supp-lifted-matcher},
and let $G_\sigma$ be its Gaussian-smoothed matcher in
Equation~\eqref{eq:supp-smoothed-matcher}.

Suppose $\hat{c}_A$ is the selected class and
$\underline{p}_A>1/2$ is a valid lower confidence bound satisfying
Equation~\eqref{eq:supp-confidence-event}.
Define
\begin{equation}
R_2
=
\sigma\Phi^{-1}(\underline{p}_A).
\end{equation}

Then, with probability at least $1-\alpha$ over the certification samples,
for every query-only adversarial perturbation
$\delta_q\in\mathbb{R}^{n}$ satisfying
\begin{equation}
\|\delta_q\|_2<R_2,
\end{equation}
the smoothed match decision is unchanged:
\begin{equation}
G_\sigma(r^\star,q+\delta_q)
=
G_\sigma(r^\star,q)
=
\hat{c}_A.
\label{eq:supp-theorem-result}
\end{equation}
\end{theorem}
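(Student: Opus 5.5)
The plan is to reduce the statement to the standard Gaussian randomized-smoothing certificate of \citet{cohen2019certified}, applied to the binary lifted classifier $\overline{M}_f$ on $\mathbb{R}^{2n}$, and then specialize it to query-only displacements. We first condition on the event of Equation~\eqref{eq:supp-confidence-event}, $p_{\hat{c}_A}(z)\ge\underline{p}_A$, which by the Clopper--Pearson construction holds with probability at least $1-\alpha$ over the certification samples. The class selection in Equation~\eqref{eq:supp-class-selection} uses independent samples, so $\hat{c}_A$ may be treated as fixed when the bound is applied. All remaining steps are deterministic on this event.

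On this event, we establish the pairwise statement: for every $\Delta\in\mathbb{R}^{2n}$ with $\|\Delta\|_2<\sigma\Phi^{-1}(\underline{p}_A)$, we have $p_{\hat{c}_A}(z+\Delta)>1/2$. We would prove this via the Neyman--Pearson lemma, or more directly via the Lipschitz property of Gaussian smoothing. Define $h(z)=\Pr_\eta[\overline{M}_f(z+\eta)=\hat{c}_A]$; this is the Gaussian convolution of an indicator function taking values in $[0,1]$. The known fact we would use (Salman et al.) is that $z\mapsto\Phi^{-1}(h(z))$ is $1/\sigma$-Lipschitz. Assuming this, we get
\[
\Phi^{-1}(h(z+\Delta))\ \ge\ \Phi^{-1}(\underline{p}_A)-\|\Delta\|_2/\sigma\ >\ 0,
\]
hence $h(z+\Delta)>1/2$. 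Because matching is binary, $p_{1-\hat{c}_A}(z+\Delta)=1-h(z+\Delta)<1/2$, so $G_\sigma(z+\Delta)=\hat{c}_A$. Applying this with $\Delta=0$ gives $G_\sigma(z)=\hat{c}_A$ as well.

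The Lipschitz/Neyman--Pearson step is the main obstacle if it has to be proved rather than cited. If needed, we would argue it as in Cohen et al. Among all measurable $g:\mathbb{R}^{2n}\to[0,1]$ with $\mathbb{E}[g(z+\eta)]\ge\underline{p}_A$, the value $\mathbb{E}[g(z+\Delta+\eta)]$ is minimized by a half-space indicator orthogonal to $\Delta$. For that half-space the shifted probability equals $\Phi\bigl(\Phi^{-1}(\underline{p}_A)-\|\Delta\|_2/\sigma\bigr)$, which is greater than $1/2$ under the radius condition.

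Finally, we specialize to the query-only threat model. Take $\Delta=[0;\delta_q]$ as in Equation~\eqref{eq:supp-query-shift}. Then $\|\Delta\|_2=\|\delta_q\|_2<R_2$ by Equation~\eqref{eq:supp-query-equality}, and $z+\Delta$ is exactly the pair $(r^\star,q+\delta_q)$. Clipping $\Pi_{\mathcal{X}}$ is absorbed into $\overline{M}_f$, so the smoothing noise is added before clipping and the Gaussian argument is unaffected. This yields Equation~\eqref{eq:supp-theorem-result}, with probability at least $1-\alpha$.
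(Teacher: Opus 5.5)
Your proposal is correct and follows essentially the same route as the paper: condition on the Clopper--Pearson confidence event, apply the Gaussian Neyman--Pearson bound $\Pr_{u\sim\mathcal{N}(z+\Delta,\sigma^2I_{2n})}[u\in\mathcal{A}]\ge\Phi(\Phi^{-1}(\underline{p}_A)-\|\Delta\|_2/\sigma)$ to the lifted binary matcher, and specialize to $\Delta=[0;\delta_q]$ with $\|\Delta\|_2=\|\delta_q\|_2$. Phrasing the key step as the $1/\sigma$-Lipschitz property of $\Phi^{-1}\circ h$ only repackages that same bound, and you make explicit three details the paper's proof leaves implicit: the $\Delta=0$ case giving $G_\sigma(r^\star,q)=\hat{c}_A$, the independence of the class-selection samples, and the absorption of clipping into $\overline{M}_f$.
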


\begin{proof}
Define the acceptance region of class $\hat{c}_A$ as
\begin{equation}
\mathcal{A}
=
\left\{
u\in\mathbb{R}^{2n}:
\overline{M}_f(u)=\hat{c}_A
\right\}.
\end{equation}

On the confidence event in
Equation~\eqref{eq:supp-confidence-event},
\begin{equation}
\Pr_{u\sim
\mathcal{N}(z,\sigma^2I_{2n})}
[u\in\mathcal{A}]
\geq
\underline{p}_A.
\label{eq:supp-base-prob}
\end{equation}

For a displacement $\Delta$, the Gaussian Neyman-Pearson argument underlying
randomized smoothing gives
\begin{equation}
\Pr_{u\sim
\mathcal{N}(z+\Delta,\sigma^2I_{2n})}
[u\in\mathcal{A}]
\geq
\Phi
\left(
\Phi^{-1}(\underline{p}_A)
-
\frac{\|\Delta\|_2}{\sigma}
\right).
\label{eq:supp-np-bound}
\end{equation}

For a query-only perturbation,
\[
\Delta
=
\begin{bmatrix}
0\\
\delta_q
\end{bmatrix},
\]
and therefore
\[
\|\Delta\|_2
=
\|\delta_q\|_2.
\]

If
\begin{equation}
\|\delta_q\|_2
<
\sigma\Phi^{-1}(\underline{p}_A)
=
R_2,
\end{equation}
then
\begin{equation}
\Phi
\left(
\Phi^{-1}(\underline{p}_A)
-
\frac{\|\delta_q\|_2}{\sigma}
\right)
>
\Phi(0)
=
\frac{1}{2}.
\end{equation}

Thus, class $\hat{c}_A$ remains the unique majority class under Gaussian
smoothing after the perturbation.
Therefore,
\begin{equation}
G_\sigma(r^\star,q+\delta_q)
=
\hat{c}_A.
\end{equation}
\end{proof}

\paragraph{Interpretation.}
When $\hat{c}_A=1$, Theorem~\ref{thm:supp-query-l2} guarantees that every
adversarial modification within the certified $\ell_2$ ball remains a match.
The guarantee is attack-independent: it holds regardless of the attack
algorithm, optimization procedure, or attacker knowledge.

The radius $R_2$ is a certified lower bound.
The matcher may remain robust to perturbations outside this radius, but such
robustness is not guaranteed by the certificate.

The theorem does not certify arbitrary geometric or photometric transformations.
Such transformations are evaluated empirically in our experiments.

\subsection{Publication-Time Hardening}

Randomized smoothing provides certification around the deployed
reference-query pair, but the resulting radius depends on the smoothed
winning-class probability.
Some reference images naturally produce weaker match probabilities and
therefore smaller certified regions.

Publication-time hardening proactively modifies the reference to improve this
local matching geometry before the image is released.

Given an original image $r$, positive near-duplicate samples
$\mathcal{P}$, and negative samples $\mathcal{N}$, we optimize
\begin{equation}
\max_{\delta_h\in\mathcal{C}_h}
\;
\mathcal{J}_{\mathrm{pos}}(\delta_h)
-
\lambda_{\mathrm{neg}}
\mathcal{J}_{\mathrm{neg}}(\delta_h)
-
\lambda_{\mathrm{dist}}
D\!\left(
r,
\Pi_{\mathcal{X}}(r+\delta_h)
\right),
\label{eq:supp-hardening-objective}
\end{equation}
where
$\mathcal{J}_{\mathrm{pos}}$ encourages high smoothed match confidence for positive
pairs,
$\mathcal{J}_{\mathrm{neg}}$ penalizes matches with unrelated images,
and $D$ constrains perceptual distortion.

\paragraph{Optimization details.}
All images are represented as $512\times512$ RGB tensors in $[0,1]$.
We use
\begin{equation}
\mathcal{C}_h
=
\left\{
\delta_h:
\|\delta_h\|_\infty\leq\frac{16}{255}
\right\},
\end{equation}
and perform 200 projected sign-gradient updates with step size $1/255$.
The positive objective uses 16 Gaussian expectation-over-transformation
samples per update at $\sigma=0.10$ and an inner five-step adversarial
optimization. The negative objective uses eight Gaussian samples per selected
negative. We use positive and negative hash-margin targets of four bits,
soft-Hamming sharpness $8$, $\lambda_{\mathrm{neg}}=1$, and an
average-squared-pixel distortion weight of $0.01$.
These settings are shared across the eight hash models and the resulting visual
change remains model-dependent.

The hardened image is
\begin{equation}
r^\star
=
\Pi_{\mathcal{X}}(r+\delta_h^\star).
\label{eq:supp-hardened-reference}
\end{equation}

Hardening itself does not constitute the certificate.
The final certificate is always computed from the original discrete matcher
$M_f$, the deployed hardened reference $r^\star$, and the randomized-smoothing
procedure described above.

Its role is instead to improve the operating point at which certification is
performed.
Increasing the smoothed winning probability $\underline{p}_A$ increases
\begin{equation}
R_2
=
\sigma\Phi^{-1}(\underline{p}_A),
\end{equation}
and therefore enlarges the certified region.

\subsection{Algorithms}
Algorithms~\ref{alg:supp-hardening} and~\ref{alg:supp-smoothing} summarize the two components of \name. Algorithm~\ref{alg:supp-hardening} presents the publication-time hardening procedure, which optimizes a bounded perturbation for each reference image using positive and negative pairs under Gaussian smoothing. Algorithm~\ref{alg:supp-smoothing} presents the matching-time procedure, which uses independent Monte Carlo samples to determine the smoothed match decision, estimate its confidence, and return a certified $\ell_2$ radius when the confidence condition is satisfied.

\begin{algorithm}[t]
\caption{Publication-Time Hardening}
\label{alg:supp-hardening}
\begin{algorithmic}[1]
\Require Original image $r$, positive set $\mathcal{P}$,
negative set $\mathcal{N}$, feasible set $\mathcal{C}_h$,
iterations $K$
\State Initialize $\delta_h\leftarrow0$
\For{$k=1$ to $K$}
    \State Sample positive queries, negative queries, and Gaussian noise
    \State Estimate differentiable surrogate smoothed match objectives
    \State Compute the gradient of
    Equation~\eqref{eq:supp-hardening-objective}
    with respect to $\delta_h$
    \State Update $\delta_h$ and project it onto $\mathcal{C}_h$
\EndFor
\State $r^\star\leftarrow
\Pi_{\mathcal{X}}(r+\delta_h)$
\State \Return $r^\star$
\end{algorithmic}
\end{algorithm}

\begin{algorithm}[t]
\caption{Matching-Time Smoothed Matching and Certification}
\label{alg:supp-smoothing}
\begin{algorithmic}[1]
\Require Original hash matcher $M_f$, hardened reference $r^\star$,
query $q$, noise scale $\sigma$, selection samples $N_0$,
certification samples $N$, confidence level $\alpha$

\State Draw $N_0$ independent Gaussian-noised reference-query pairs
\State Evaluate $M_f$ on each pair
\State Select the majority class $\hat{c}_A$

\State Draw an independent set of $N$ Gaussian-noised pairs
\State Count $K_A$, the number classified as $\hat{c}_A$
\State Compute the lower confidence bound $\underline{p}_A$

\If{$\underline{p}_A>1/2$}
    \State $R_2\leftarrow
    \sigma\Phi^{-1}(\underline{p}_A)$
    \State \Return $\hat{c}_A$ and certified radius $R_2$
\Else
    \State \Return abstention from certification
\EndIf
\end{algorithmic}
\end{algorithm}

\section{Additional Experimental Details and Results}
\label{sec:supp-evaluation}

This section provides the complete experimental protocol and detailed results
supporting the evaluation in the main paper. We evaluate \name on eight
perceptual-hash systems -- NeuralHash, DINOHash, HybridHash, SSCD, SimDINO,
ViT2Hash, C-PDNA, and C-PDQ -- across ImageNet, MS-COCO, and
Stable-Diffusion images.

We separately evaluate
(1) certified robustness to query-only additive adversarial perturbations,
(2) empirical robustness to adaptive white-box and black-box adversarial
evasion attacks,
(3) empirical robustness to common photometric and geometric image
transformations,
(4) collision behavior on unrelated images, and
(5) visual quality after publication-time hardening and adversarial attacks.

\subsection{Detailed Experimental Protocol}

\paragraph{Images, matching, and aggregation.}
All images are converted to RGB, resized to $512\times512$, represented in
$[0,1]$, and clipped to this range after modification. We use the matching
threshold $0.2$ for all eight hashes (i.e., bit error rate $\leq$ 20\%). Adaptive attacks and certification use
100 images from each dataset, giving 300 reference-query pairs per hash and
attack setting. Transformation evaluation uses 50 images per dataset.
We report dataset-aggregated rates by pooling the evaluated pairs, and use
random seed 2026 throughout.

\paragraph{Compared variants.}
\emph{Original} uses the original perceptual hash and its native matching
rule. \emph{Smoothing} applies only matching-time randomized smoothing.
\emph{Ours} combines matching-time randomized smoothing with publication-time
hardening. The attacks against Original and Ours optimize against the
corresponding original or hardened reference, respectively.

\paragraph{Attack operating points and norm convention.}
The evaluated attacks use three common operating points, generated with
$\ell_2$ caps of 40, 90, and 180. We measure the
realized $\ell_2$ difference on the full $512\times512\times3$ RGB tensor in
$[0,1]$. Across all hashes, datasets, and Original/Ours attacks, the mean
white-box magnitudes are $39.3$, $89.7$, and $179.2$.

\paragraph{Adaptive white-box attacks.}
The white-box attacker knows the hash, matching rule, threshold, and defense.
We use projected gradient descent in RGB space with a focused flip-margin
objective, 300 steps, step size $1/255$, three random-start restarts, and
early stopping after successful evasion. Projection uses the operating-point
$\ell_2$ cap. Smoothing is evaluated on the resulting adversarial query
using fresh defender randomness.

\paragraph{Adaptive black-box attacks.}
The black-box attacker has no access to target-model gradients and instead
uses Natural Evolution Strategies with the Hamming score returned by target
queries. The perturbation is optimized in a $64\times64$ grayscale space and
mapped to the RGB query. We use a clean start, at most 500 optimization steps,
a 1,500-query limit, 32 antithetic directions per estimate, step size $2/255$,
NES scale $4/255$, and early stopping after successful evasion.

\paragraph{Hardening and smoothing.}
Publication-time hardening uses the shared settings: 200 projected sign-gradient updates, a
$16/255$ hardening constraint, step size $1/255$, and Gaussian
expectation-over-transformation samples at $\sigma=0.10$. At matching time,
we use $\sigma=0.10$, $N_0=100$ independent samples for class selection,
$N=5000$ independent samples for probability estimation, and
$\alpha=10^{-3}$. The reported confidence statement is pointwise for each
certified pair.

\paragraph{Transformation parameters.}
We evaluate JPEG quality levels $\{95,80,60,40\}$; brightness and contrast
factors $\{0.70,0.85,1.15,1.30\}$; retained crop side ratios
$\{0.90,0.80,0.70\}$ followed by resizing; Gaussian-blur kernel sizes
$\{3,5,7\}$; Gaussian-noise standard deviations
$\{0.01,0.02,0.04\}$ in $[0,1]$; and rotations
$\{2^\circ,5^\circ,10^\circ\}$.

\paragraph{Collisions and visual-quality metrics.}
Collision rates use 2,000 randomly sampled non-matching pairs from 100
ImageNet images per hash. Visual-quality evaluation uses 100 images per
dataset and hash. SSIM and LPIPS are computed for paired images and then
averaged; FID compares the corresponding sets of original and
hardened/attacked images and is therefore a distribution-level measurement.

Table~\ref{tab:overall-summary} provides a compact overview of the five
evaluation dimensions. The remaining tables report the complete results used
to construct this summary.

\begin{table*}[tb]
\centering

\small
\setlength{\tabcolsep}{2.0pt}
\renewcommand{\arraystretch}{1.08}
\begin{tabular*}{\textwidth}{@{\extracolsep{\fill}}lcccccccccc}
\toprule
\multirow{2}{*}{Hash} & \multicolumn{2}{c}{White-box} & \multicolumn{2}{c}{Black-box} & \multicolumn{2}{c}{Transform} & \multicolumn{2}{c}{Collision} & \multicolumn{2}{c}{Certified $\overline{R}_2\uparrow$} \\
\cmidrule(lr){2-3}\cmidrule(lr){4-5}\cmidrule(lr){6-7}\cmidrule(lr){8-9}\cmidrule(lr){10-11}
& Original & Ours & Original & Ours & Original & Ours & Original & Ours & Original & Ours \\
\midrule
NeuralHash & 100.0 & \bestcell{20.6} & 35.2 & \bestcell{4.4} & \bestcell{0.0} & 7.7 & \bestcell{0.0} & \bestcell{0.0} & -- & \bestcell{0.2987} \\
DINOHash & 100.0 & \bestcell{14.0} & 15.6 & \bestcell{0.0} & \bestcell{0.5} & 2.8 & \bestcell{0.0} & \bestcell{0.0} & -- & \bestcell{0.2993} \\
HybridHash & 100.0 & \bestcell{1.0} & 22.0 & \bestcell{0.0} & \bestcell{0.0} & 0.5 & \bestcell{0.1} & 0.3 & -- & \bestcell{0.2993} \\
SSCD & 100.0 & \bestcell{3.3} & 73.1 & \bestcell{5.9} & \bestcell{0.5} & 23.6 & \bestcell{0.0} & \bestcell{0.0} & -- & \bestcell{0.2993} \\
SimDINO & 100.0 & \bestcell{43.7} & 16.6 & \bestcell{0.0} & \bestcell{0.1} & 8.9 & \bestcell{0.0} & \bestcell{0.0} & -- & \bestcell{0.2993} \\
ViT2Hash & 100.0 & \bestcell{12.0} & 2.3 & \bestcell{0.0} & \bestcell{0.1} & 3.9 & 0.1 & \bestcell{0.0} & -- & \bestcell{0.2993} \\
C-PDNA & 94.2 & \bestcell{0.0} & \bestcell{0.0} & \bestcell{0.0} & 0.8 & \bestcell{0.3} & 5.9 & \bestcell{5.7} & 0.2968 & \bestcell{0.2993} \\
C-PDQ & 94.7 & \bestcell{0.0} & \bestcell{0.0} & \bestcell{0.0} & \bestcell{0.0} & \bestcell{0.0} & \bestcell{100.0} & \bestcell{100.0} & 0.1282 & \bestcell{0.2993} \\
\bottomrule
\end{tabular*}
\caption{Overall robustness summary. White-box, black-box, transformation, and collision entries are rates (\%); the first two are weighted averages over the three attack operating points reported as $\ell_2$ levels 40, 90, and 180, and transformation rates use the lowest tested intensity. These four error rates are lower-is-better. The certified bound is the mean query-only certified $\ell_2$-radius lower bound $\overline{R}_2$ in normalized-input coordinates and is higher-is-better. A dash means that Original has no model-specific verifier. The best value within each metric and row is highlighted.}
\label{tab:overall-summary}
\end{table*}

\subsection{Five-Dimension Robustness Profiles}

Figure~\ref{fig:supp-radar-profiles} presents the overall results as one radar
plot per perceptual hash. Every axis is oriented so that farther from the
center is better. The certified score is the mean certified radius normalized
by $0.3$; the white-box, black-box, transformation, and collision scores are
one minus their corresponding error rates. The plots make the main trade-off
intuitive: \name greatly expands adversarial and certified robustness for
every hash, while transformation robustness remains model-dependent.
C-PDQ's collapsed collision axis reflects the underlying model's pre-existing
collision behavior.

\begin{figure*}[tb]
  \centering
  \includegraphics[width=0.98\textwidth]{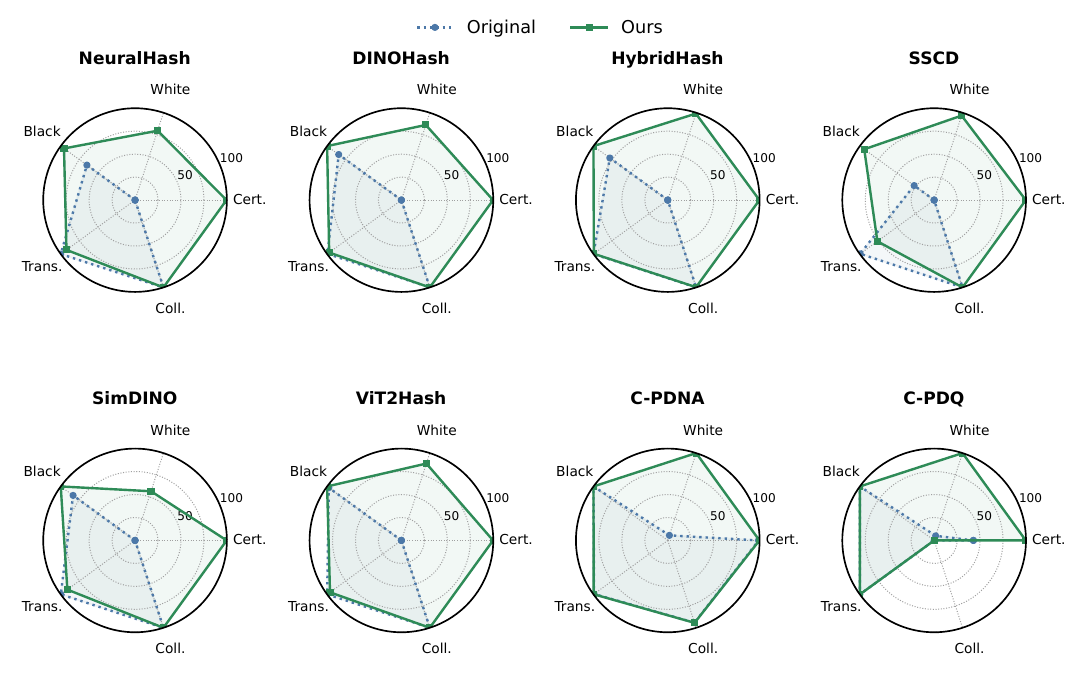}
  \caption{Five-dimensional robustness profiles for all evaluated perceptual
  hashes. Larger values are better on every axis. Original is the unmodified
  hash; Ours combines publication-time hardening and matching-time randomized
  smoothing. White-box and black-box scores are averaged over the three
  attack operating points.}
  \label{fig:supp-radar-profiles}
\end{figure*}

\subsection{Certified Robustness}

For each positive reference-query pair, we estimate the smoothed match
probability using independent Monte Carlo samples and compute the one-sided
Clopper-Pearson lower confidence bound $\underline{p}_A$. The corresponding
query-only certified $\ell_2$ radius is
\begin{equation}
R_2
=
\sigma\Phi^{-1}(\underline{p}_A).
\end{equation}

In addition to the mean certified radius $\overline{R}_2$, we report the
\emph{certified non-evasion rate} (CNER) at selected radii. CNER measures the
fraction of positive pairs that are both classified as matches and formally
certified against every query perturbation within the specified $\ell_2$
radius.

Table~\ref{tab:certified-robustness} reports the complete statistics over 300
positive pairs per hash. Matching-time randomized smoothing provides
certification for all eight hashes, including models without a native
verification procedure. Publication-time hardening further improves the
certificate: the average $\overline{R}_2$ across the eight hashes increases
from approximately $0.252$ with Smoothing to $0.299$ with Ours. At
$R_2=0.2$, Ours achieves a 100\% CNER for all eight hashes.

For the two CertPHash variants, we additionally compare against their native
verification results. C-PDNA provides a mean certified bound of $0.2968$,
which Ours maintains at approximately $0.2993$. For C-PDQ, the mean certified
bound increases from $0.1282$ to $0.2993$. Thus, \name can introduce
certification to previously uncertified hashes and complement hashes that
already provide native certification.

\begin{table*}[tb]
\centering

\small
\setlength{\tabcolsep}{2.8pt}
\renewcommand{\arraystretch}{1.08}
\begin{tabular*}{\textwidth}{@{\extracolsep{\fill}}lccccccccc}
\toprule
\multirow{2}{*}{Hash} & \multicolumn{3}{c}{Certified $\ell_2$ Bound $\overline{R}_2$} & \multicolumn{3}{c}{CNER at $R_2=0.1$} & \multicolumn{3}{c}{CNER at $R_2=0.2$} \\
\cmidrule(lr){2-4}
\cmidrule(lr){5-7}
\cmidrule(lr){8-10}
& Original & Smoothing & Ours & Original & Smoothing & Ours & Original & Smoothing & Ours \\
\midrule
NeuralHash & -- & 0.1685 & \bestcell{0.2987} & -- & 63.7 & \bestcell{100.0} & -- & 51.3 & \bestcell{100.0} \\
\addlinespace[2pt]
DINOHash & -- & 0.2688 & \bestcell{0.2993} & -- & 93.3 & \bestcell{100.0} & -- & 88.7 & \bestcell{100.0} \\
\addlinespace[2pt]
HybridHash & -- & 0.2773 & \bestcell{0.2993} & -- & 97.7 & \bestcell{100.0} & -- & 92.7 & \bestcell{100.0} \\
\addlinespace[2pt]
SSCD & -- & 0.2151 & \bestcell{0.2993} & -- & 75.3 & \bestcell{100.0} & -- & 69.0 & \bestcell{100.0} \\
\addlinespace[2pt]
SimDINO & -- & 0.2182 & \bestcell{0.2993} & -- & 78.3 & \bestcell{100.0} & -- & 69.0 & \bestcell{100.0} \\
\addlinespace[2pt]
ViT2Hash & -- & 0.2655 & \bestcell{0.2993} & -- & 92.7 & \bestcell{100.0} & -- & 86.3 & \bestcell{100.0} \\
\addlinespace[2pt]
C-PDNA & 0.2968 & \bestcell{0.2993} & \bestcell{0.2993} & 99.7 & \bestcell{100.0} & \bestcell{100.0} & 98.3 & \bestcell{100.0} & \bestcell{100.0} \\
\addlinespace[2pt]
C-PDQ & 0.1282 & \bestcell{0.2993} & \bestcell{0.2993} & 90.7 & \bestcell{100.0} & \bestcell{100.0} & 0.0 & \bestcell{100.0} & \bestcell{100.0} \\
\bottomrule
\end{tabular*}
\caption{Query-only certified evasion robustness over 300 positive pairs per hash. $\overline{R}_2$ is the mean certified $\ell_2$-radius lower bound in normalized-input coordinates; CNER is the certified non-evasion rate at $\ell_2$ radii 0.1 and 0.2. Original CertPHash uses deterministic closed-ball CROWN verification, while Smoothing and Ours use open-ball randomized-smoothing certificates. A dash means that Original has no model-specific verifier. Higher is better.}
\label{tab:certified-robustness}
\end{table*}

\subsection{White-Box Evasion Attacks}

Table~\ref{tab:white-box-budget} reports ASR for Original, Smoothing, and Ours
at the three attack operating points. Original is highly vulnerable and
reaches 100\% ASR for most hashes and settings. Smoothing substantially
reduces ASR for some hashes, but its effectiveness varies across models.

Publication-time hardening substantially strengthens Smoothing. At reported
budget 40, Ours has an average white-box ASR of 0.09\%; seven hashes have
0\% ASR and SimDINO has 0.7\%. At budget 90, the average remains 4.6\%.
HybridHash, SSCD, C-PDNA, and C-PDQ have 0\% ASR; NeuralHash, DINOHash, and
ViT2Hash remain below 3\%, while SimDINO reaches 32\%. At budget 180, the
average increases to 30.8\%, with NeuralHash and SimDINO becoming the most
vulnerable.

These results show both the effectiveness and the limitation of empirical
robustness: low observed ASR does not constitute a universal guarantee. The
certified results above provide the complementary formal guarantee.

\begin{table*}[tb]
\centering

\small
\setlength{\tabcolsep}{2pt}
\renewcommand{\arraystretch}{1.02}
\begin{tabular*}{\textwidth}{@{\extracolsep{\fill}}lccccccccc}
\toprule
\multirow{2}{*}{Hash} & \multicolumn{3}{c}{Reported $\ell_2$ Level 40} & \multicolumn{3}{c}{Reported $\ell_2$ Level 90} & \multicolumn{3}{c}{Reported $\ell_2$ Level 180} \\
\cmidrule(lr){2-4}
\cmidrule(lr){5-7}
\cmidrule(lr){8-10}
& Original & Smoothing & Ours & Original & Smoothing & Ours & Original & Smoothing & Ours \\
\midrule
NeuralHash & 100.0 & 43.7 & \bestcell{0.0} & 100.0 & 76.0 & \bestcell{1.7} & 100.0 & 94.3 & \bestcell{60.0} \\
\addlinespace[2pt]
DINOHash & 100.0 & 23.3 & \bestcell{0.0} & 100.0 & 54.7 & \bestcell{2.7} & 100.0 & 81.3 & \bestcell{39.3} \\
\addlinespace[2pt]
HybridHash & 100.0 & 3.3 & \bestcell{0.0} & 100.0 & 21.7 & \bestcell{0.0} & 100.0 & 40.7 & \bestcell{3.0} \\
\addlinespace[2pt]
SSCD & 100.0 & 56.0 & \bestcell{0.0} & 100.0 & 85.0 & \bestcell{0.0} & 100.0 & 98.7 & \bestcell{10.0} \\
\addlinespace[2pt]
SimDINO & 100.0 & 64.7 & \bestcell{0.7} & 100.0 & 86.7 & \bestcell{32.0} & 100.0 & 99.3 & \bestcell{98.3} \\
\addlinespace[2pt]
ViT2Hash & 100.0 & 13.0 & \bestcell{0.0} & 100.0 & 37.0 & \bestcell{0.3} & 100.0 & 72.0 & \bestcell{35.7} \\
\addlinespace[2pt]
C-PDNA & 82.7 & \bestcell{0.0} & \bestcell{0.0} & 100.0 & \bestcell{0.0} & \bestcell{0.0} & 100.0 & \bestcell{0.0} & \bestcell{0.0} \\
\addlinespace[2pt]
C-PDQ & 84.7 & \bestcell{0.0} & \bestcell{0.0} & 99.3 & \bestcell{0.0} & \bestcell{0.0} & 100.0 & \bestcell{0.0} & \bestcell{0.0} \\
\bottomrule
\end{tabular*}
\caption{White-box adversarial evasion ASR (\%) at the three operating points reported as $\ell_2$ levels 40, 90, and 180 in the main paper. Lower is better. The best value within each row and operating point is highlighted.}
\label{tab:white-box-budget}
\end{table*}

\subsection{Black-Box Evasion Attacks}

Table~\ref{tab:black-box-budget} reports the detailed results under the same
three operating points. Ours achieves 0\% ASR across all eight hashes at the
first two settings. At the largest setting, six hashes remain at 0\% ASR;
NeuralHash and SSCD reach 13.3\% and 17.7\%, respectively. Across all hashes
and settings, average black-box ASR decreases from 20.6\% for Original to
1.3\% for Ours.

The stronger performance under black-box attacks is consistent with the
reduced information available to the attacker. Nevertheless, these results
remain empirical and are interpreted separately from the certified
$\ell_2$ guarantees.

\begin{table*}[tb]
\centering

\small
\setlength{\tabcolsep}{2pt}
\renewcommand{\arraystretch}{1.02}
\begin{tabular*}{\textwidth}{@{\extracolsep{\fill}}lccccccccc}
\toprule
\multirow{2}{*}{Hash} & \multicolumn{3}{c}{Reported $\ell_2$ Level 40} & \multicolumn{3}{c}{Reported $\ell_2$ Level 90} & \multicolumn{3}{c}{Reported $\ell_2$ Level 180} \\
\cmidrule(lr){2-4}
\cmidrule(lr){5-7}
\cmidrule(lr){8-10}
& Original & Smoothing & Ours & Original & Smoothing & Ours & Original & Smoothing & Ours \\
\midrule
NeuralHash & 16.3 & 13.7 & \bestcell{0.0} & 28.0 & 25.0 & \bestcell{0.0} & 61.3 & 57.0 & \bestcell{13.3} \\
\addlinespace[2pt]
DINOHash & 3.7 & 3.0 & \bestcell{0.0} & 8.7 & 6.7 & \bestcell{0.0} & 34.3 & 31.0 & \bestcell{0.0} \\
\addlinespace[2pt]
HybridHash & 16.7 & 1.0 & \bestcell{0.0} & 18.7 & 6.0 & \bestcell{0.0} & 30.7 & 19.7 & \bestcell{0.0} \\
\addlinespace[2pt]
SSCD & 49.3 & 43.7 & \bestcell{0.0} & 71.7 & 71.0 & \bestcell{0.0} & 98.3 & 98.3 & \bestcell{17.7} \\
\addlinespace[2pt]
SimDINO & 3.0 & 2.7 & \bestcell{0.0} & 9.3 & 9.3 & \bestcell{0.0} & 37.3 & 37.0 & \bestcell{0.0} \\
\addlinespace[2pt]
ViT2Hash & \bestcell{0.0} & \bestcell{0.0} & \bestcell{0.0} & 1.0 & 0.3 & \bestcell{0.0} & 6.0 & 5.7 & \bestcell{0.0} \\
\addlinespace[2pt]
C-PDNA & \bestcell{0.0} & \bestcell{0.0} & \bestcell{0.0} & \bestcell{0.0} & \bestcell{0.0} & \bestcell{0.0} & \bestcell{0.0} & \bestcell{0.0} & \bestcell{0.0} \\
\addlinespace[2pt]
C-PDQ & \bestcell{0.0} & \bestcell{0.0} & \bestcell{0.0} & \bestcell{0.0} & \bestcell{0.0} & \bestcell{0.0} & \bestcell{0.0} & \bestcell{0.0} & \bestcell{0.0} \\
\bottomrule
\end{tabular*}
\caption{Black-box adversarial evasion ASR (\%) at the three operating points reported as $\ell_2$ levels 40, 90, and 180 in the main paper. Lower is better. The best value within each row and operating point is highlighted.}
\label{tab:black-box-budget}
\end{table*}

\subsection{Robustness to Image Transformations}

Adversarial perturbations are only one way in which published images may be
modified. We therefore evaluate the seven transformations and exact
parameters listed in the protocol above.

Table~\ref{tab:benign} reports the transformation evasion rate at the lowest
tested intensity. Original is generally highly robust to these
transformations. Ours retains low transformation evasion for most hashes but
introduces a model-dependent trade-off. DINOHash, HybridHash, ViT2Hash, and
the two CertPHash variants maintain relatively low rates. SSCD exhibits the
largest degradation, particularly under cropping and rotation, while
SimDINO also becomes more sensitive to geometric transformations.

Improving adversarial robustness therefore does not automatically improve
robustness to structured transformations. Our certified guarantee applies
only to additive norm-bounded perturbations; transformation robustness is an
empirical property.

\begin{table*}[tb]
\centering

\footnotesize
\setlength{\tabcolsep}{2pt}
\renewcommand{\arraystretch}{1.00}
\begin{tabular*}{\textwidth}{@{\extracolsep{\fill}}llccc}
\toprule
Hash & Transformation & Original & Smoothing & Ours \\
\midrule
\multirow{7}{*}{NeuralHash} & brightness & \bestcell{0.0} & 15.5 & 4.2 \\
 & contrast & \bestcell{0.0} & 18.2 & 4.2 \\
 & crop & \bestcell{0.0} & 24.5 & 26.7 \\
 & Gaussian blur & \bestcell{0.0} & 8.2 & 5.0 \\
 & Gaussian noise & \bestcell{0.0} & \bestcell{0.0} & \bestcell{0.0} \\
 & JPEG & \bestcell{0.0} & 0.9 & \bestcell{0.0} \\
 & rotation & \bestcell{0.0} & 21.8 & 14.2 \\
\addlinespace[2pt]
\multirow{7}{*}{DINOHash} & brightness & \bestcell{0.0} & 2.0 & \bestcell{0.0} \\
 & contrast & \bestcell{0.0} & 2.0 & \bestcell{0.0} \\
 & crop & \bestcell{1.3} & 9.5 & 6.4 \\
 & Gaussian blur & \bestcell{0.0} & 0.7 & 0.9 \\
 & Gaussian noise & \bestcell{0.0} & \bestcell{0.0} & \bestcell{0.0} \\
 & JPEG & \bestcell{0.0} & \bestcell{0.0} & \bestcell{0.0} \\
 & rotation & \bestcell{2.0} & 12.2 & 11.9 \\
\addlinespace[2pt]
\multirow{7}{*}{HybridHash} & brightness & \bestcell{0.0} & \bestcell{0.0} & \bestcell{0.0} \\
 & contrast & \bestcell{0.0} & \bestcell{0.0} & \bestcell{0.0} \\
 & crop & \bestcell{0.0} & 3.4 & 2.0 \\
 & Gaussian blur & \bestcell{0.0} & 0.7 & \bestcell{0.0} \\
 & Gaussian noise & \bestcell{0.0} & \bestcell{0.0} & \bestcell{0.0} \\
 & JPEG & \bestcell{0.0} & \bestcell{0.0} & \bestcell{0.0} \\
 & rotation & \bestcell{0.0} & 1.3 & 1.4 \\
\addlinespace[2pt]
\multirow{7}{*}{SSCD} & brightness & \bestcell{0.0} & 13.3 & \bestcell{0.0} \\
 & contrast & \bestcell{0.0} & 10.8 & \bestcell{0.0} \\
 & crop & \bestcell{2.0} & 35.0 & 79.3 \\
 & Gaussian blur & \bestcell{0.0} & 0.8 & 17.2 \\
 & Gaussian noise & \bestcell{0.0} & 1.7 & \bestcell{0.0} \\
 & JPEG & \bestcell{0.0} & 0.8 & \bestcell{0.0} \\
 & rotation & \bestcell{1.3} & 31.7 & 69.0 \\
\addlinespace[2pt]
\multirow{7}{*}{SimDINO} & brightness & \bestcell{0.0} & 6.1 & 4.2 \\
 & contrast & \bestcell{0.0} & 6.8 & 6.2 \\
 & crop & \bestcell{0.7} & 8.3 & 22.9 \\
 & Gaussian blur & \bestcell{0.0} & 3.0 & 10.4 \\
 & Gaussian noise & \bestcell{0.0} & \bestcell{0.0} & \bestcell{0.0} \\
 & JPEG & \bestcell{0.0} & \bestcell{0.0} & \bestcell{0.0} \\
 & rotation & \bestcell{0.0} & 6.1 & 18.8 \\
\addlinespace[2pt]
\multirow{7}{*}{ViT2Hash} & brightness & \bestcell{0.0} & 4.1 & 2.6 \\
 & contrast & \bestcell{0.0} & 2.1 & 3.5 \\
 & crop & \bestcell{0.7} & 4.8 & 12.2 \\
 & Gaussian blur & \bestcell{0.0} & 0.7 & 5.2 \\
 & Gaussian noise & \bestcell{0.0} & \bestcell{0.0} & \bestcell{0.0} \\
 & JPEG & \bestcell{0.0} & \bestcell{0.0} & \bestcell{0.0} \\
 & rotation & \bestcell{0.0} & 3.4 & 3.5 \\
\addlinespace[2pt]
\multirow{7}{*}{C-PDNA} & brightness & \bestcell{0.0} & \bestcell{0.0} & \bestcell{0.0} \\
 & contrast & \bestcell{0.0} & \bestcell{0.0} & \bestcell{0.0} \\
 & crop & \bestcell{0.7} & \bestcell{0.7} & 1.3 \\
 & Gaussian blur & \bestcell{0.0} & \bestcell{0.0} & \bestcell{0.0} \\
 & Gaussian noise & \bestcell{0.0} & \bestcell{0.0} & \bestcell{0.0} \\
 & JPEG & \bestcell{0.0} & \bestcell{0.0} & \bestcell{0.0} \\
 & rotation & 4.7 & \bestcell{0.0} & 0.7 \\
\addlinespace[2pt]
\multirow{7}{*}{C-PDQ} & brightness & \bestcell{0.0} & \bestcell{0.0} & \bestcell{0.0} \\
 & contrast & \bestcell{0.0} & \bestcell{0.0} & \bestcell{0.0} \\
 & crop & \bestcell{0.0} & \bestcell{0.0} & \bestcell{0.0} \\
 & Gaussian blur & \bestcell{0.0} & \bestcell{0.0} & \bestcell{0.0} \\
 & Gaussian noise & \bestcell{0.0} & \bestcell{0.0} & \bestcell{0.0} \\
 & JPEG & \bestcell{0.0} & \bestcell{0.0} & \bestcell{0.0} \\
 & rotation & \bestcell{0.0} & \bestcell{0.0} & \bestcell{0.0} \\
\bottomrule
\end{tabular*}
\caption{Transformation evasion rate (\%) at the lowest tested intensity. Lower is better. The best value within each row is highlighted.}
\label{tab:benign}
\end{table*}

\subsection{Collision Behavior}

A defense should not improve evasion robustness simply by making the matcher
accept more image pairs. We therefore evaluate collision rates on unrelated
images.

As shown in Table~\ref{tab:collision-rate}, collision rates remain unchanged
or near zero for most hashes. NeuralHash, DINOHash, SSCD, and SimDINO remain
at 0\%, while ViT2Hash decreases from 0.15\% to 0\%. HybridHash increases
slightly from 0.05\% to 0.30\%, and C-PDNA remains similar at approximately
6\%.

C-PDQ has a 100\% collision rate for both Original and Ours under the
evaluated configuration. This is an inherited limitation of the underlying
hash configuration rather than a failure introduced by \name. Overall, the
robustness gains are not generally obtained by indiscriminately increasing
the match rate.

\begin{table*}[tb]
\centering

\small
\setlength{\tabcolsep}{4pt}
\renewcommand{\arraystretch}{1.08}
\begin{tabular*}{\textwidth}{@{\extracolsep{\fill}}lccc}
\toprule
Hash & Original & Smoothing & Ours \\
\midrule
NeuralHash & \bestcell{0.00} & \bestcell{0.00} & \bestcell{0.00} \\
\addlinespace[2pt]
DINOHash & \bestcell{0.00} & \bestcell{0.00} & \bestcell{0.00} \\
\addlinespace[2pt]
HybridHash & \bestcell{0.05} & \bestcell{0.05} & 0.30 \\
\addlinespace[2pt]
SSCD & \bestcell{0.00} & \bestcell{0.00} & \bestcell{0.00} \\
\addlinespace[2pt]
SimDINO & \bestcell{0.00} & \bestcell{0.00} & \bestcell{0.00} \\
\addlinespace[2pt]
ViT2Hash & 0.15 & 0.10 & \bestcell{0.00} \\
\addlinespace[2pt]
C-PDNA & 5.90 & 6.05 & \bestcell{5.65} \\
\addlinespace[2pt]
C-PDQ & \bestcell{100.00} & \bestcell{100.00} & \bestcell{100.00} \\
\bottomrule
\end{tabular*}
\caption{Hash collision rate (\%) on sampled non-matching image pairs. Lower is better. The best value within each row is highlighted.}
\label{tab:collision-rate}
\end{table*}

\subsection{Visual Quality after Publication-Time Hardening}
\label{sec:supp-hardening-quality}

Publication-time hardening modifies the image before release. We quantify the
difference between each original and hardened image using SSIM, LPIPS, and
FID. Table~\ref{tab:image-hardening-difference} reports the complete results
across hashes and datasets.

The amount of distortion varies substantially across hash models. C-PDNA and
C-PDQ require relatively small changes, while several learned hashes require
larger modifications at the evaluated hardening strength. We report these
measurements explicitly rather than assuming that the perceptual impact is
identical across models.

Figure~\ref{fig:hardening-examples} complements the full quantitative table
with a favorable matched example for each of the eight hashes. Each column
uses a different high-SSIM ImageNet pair so that the gallery covers diverse
image content as well as every hash model. These examples illustrate the
visual quality that hardening can attain and are not intended to represent
the complete distribution. The remaining visible differences, particularly
for SSCD, are consistent with the model-dependent averages in
Table~\ref{tab:image-hardening-difference}. If visual preservation is a
deployment requirement, the hardening strength can be reduced at the cost of
a smaller robustness gain.

\begin{table*}[tb]
\centering

\small
\setlength{\tabcolsep}{4pt}
\renewcommand{\arraystretch}{1.08}
\begin{tabular*}{\textwidth}{@{\extracolsep{\fill}}llrrr}
\toprule
Hash & Dataset & SSIM & LPIPS & FID \\
\midrule
\multirow{3}{*}{NeuralHash} & ImageNet & 0.742 & 0.262 & 26.8 \\
 & MS-COCO & 0.713 & 0.270 & 75.7 \\
 & Stable-Diffusion & 0.756 & 0.176 & 47.6 \\
\addlinespace[2pt]
\multirow{3}{*}{DINOHash} & ImageNet & 0.699 & 0.301 & 29.6 \\
 & MS-COCO & 0.678 & 0.307 & 85.7 \\
 & Stable-Diffusion & 0.724 & 0.206 & 57.7 \\
\addlinespace[2pt]
\multirow{3}{*}{HybridHash} & ImageNet & 0.734 & 0.258 & 24.9 \\
 & MS-COCO & 0.703 & 0.267 & 70.3 \\
 & Stable-Diffusion & 0.749 & 0.175 & 46.1 \\
\addlinespace[2pt]
\multirow{3}{*}{SSCD} & ImageNet & 0.685 & 0.321 & 56.2 \\
 & MS-COCO & 0.651 & 0.340 & 156.9 \\
 & Stable-Diffusion & 0.698 & 0.234 & 82.5 \\
\addlinespace[2pt]
\multirow{3}{*}{SimDINO} & ImageNet & 0.639 & 0.274 & 23.2 \\
 & MS-COCO & 0.608 & 0.285 & 68.7 \\
 & Stable-Diffusion & 0.660 & 0.182 & 43.7 \\
\addlinespace[2pt]
\multirow{3}{*}{ViT2Hash} & ImageNet & 0.628 & 0.280 & 11.8 \\
 & MS-COCO & 0.605 & 0.295 & 38.5 \\
 & Stable-Diffusion & 0.655 & 0.213 & 26.1 \\
\addlinespace[2pt]
\multirow{3}{*}{C-PDNA} & ImageNet & 0.923 & 0.090 & 5.7 \\
 & MS-COCO & 0.911 & 0.095 & 21.2 \\
 & Stable-Diffusion & 0.933 & 0.053 & 12.7 \\
\addlinespace[2pt]
\multirow{3}{*}{C-PDQ} & ImageNet & 0.989 & 0.037 & 1.7 \\
 & MS-COCO & 0.980 & 0.036 & 8.3 \\
 & Stable-Diffusion & 0.991 & 0.014 & 3.9 \\
\bottomrule
\end{tabular*}
\caption{Visual similarity between original and hardened images. SSIM and LPIPS are averaged over paired images; FID compares the corresponding image sets. Higher SSIM and lower LPIPS/FID indicate better visual preservation.}
\label{tab:image-hardening-difference}
\end{table*}

\begin{figure*}[tb]
  \centering
  \setlength{\tabcolsep}{1pt}
  \renewcommand{\arraystretch}{1.0}
  \begin{tabular}{@{}l*{8}{c}@{}}
    & \scriptsize\textbf{NeuralHash}
    & \scriptsize\textbf{DINOHash}
    & \scriptsize\textbf{HybridHash}
    & \scriptsize\textbf{SSCD}
    & \scriptsize\textbf{SimDINO}
    & \scriptsize\textbf{ViT2Hash}
    & \scriptsize\textbf{C-PDNA}
    & \scriptsize\textbf{C-PDQ} \\
    \scriptsize\textbf{Original}
    & \includegraphics[width=0.103\textwidth]{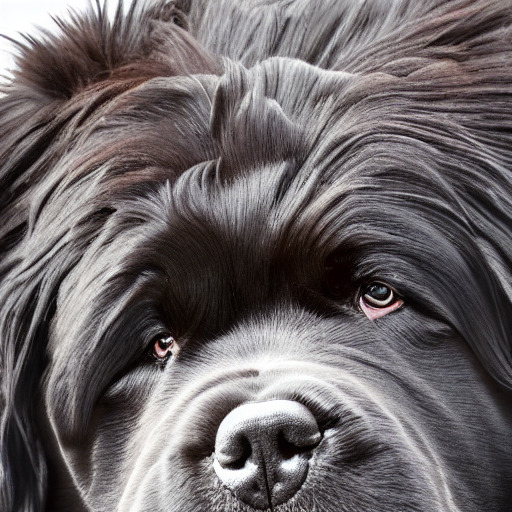}
    & \includegraphics[width=0.103\textwidth]{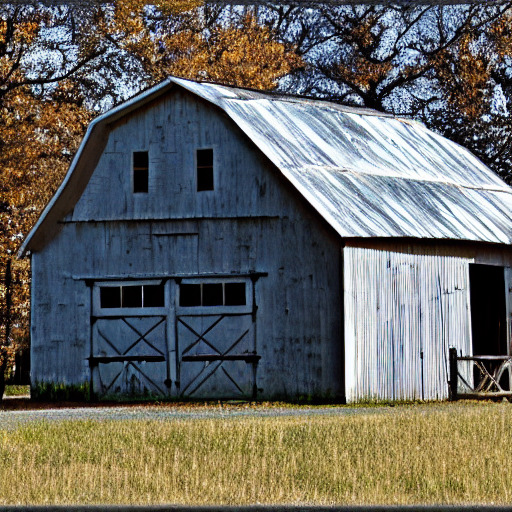}
    & \includegraphics[width=0.103\textwidth]{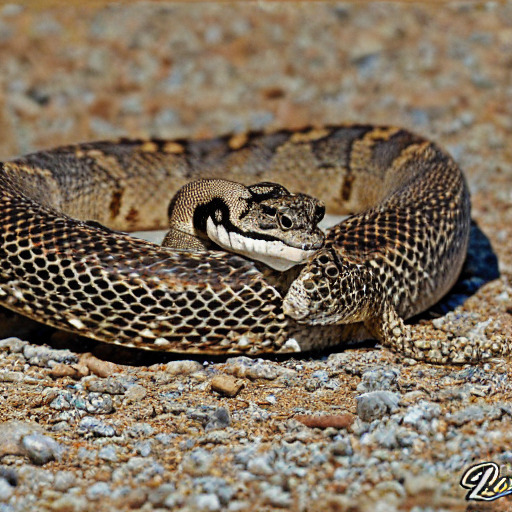}
    & \includegraphics[width=0.103\textwidth]{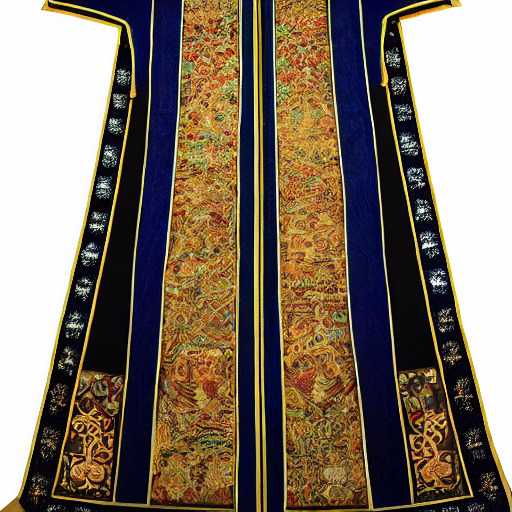}
    & \includegraphics[width=0.103\textwidth]{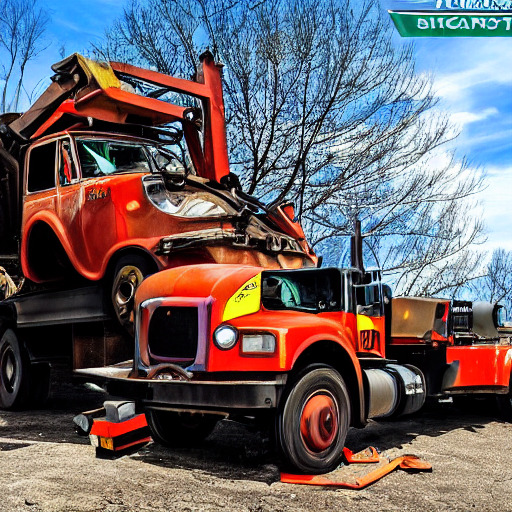}
    & \includegraphics[width=0.103\textwidth]{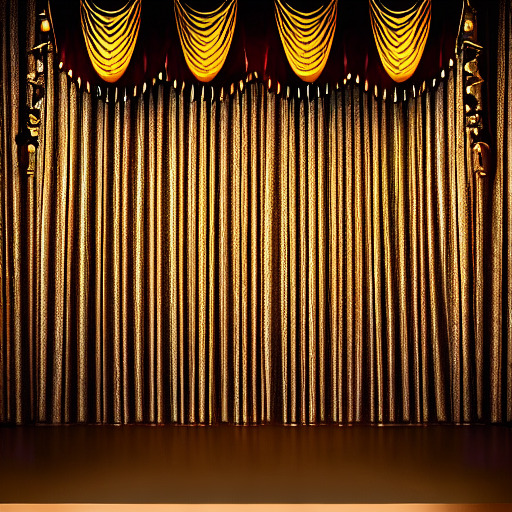}
    & \includegraphics[width=0.103\textwidth]{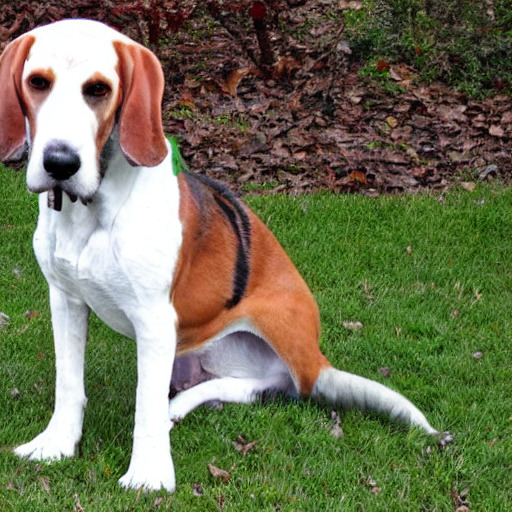}
    & \includegraphics[width=0.103\textwidth]{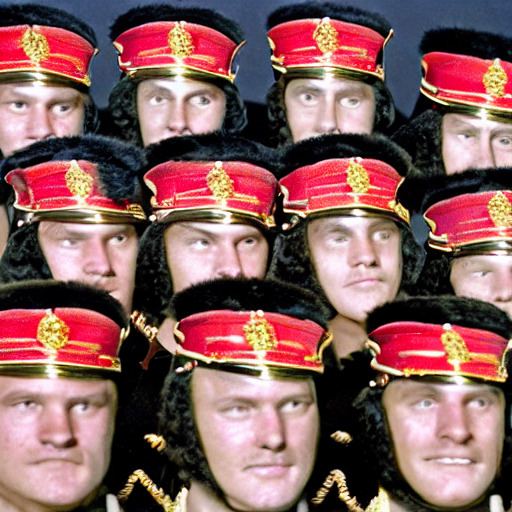} \\
    \scriptsize\textbf{Hardened}
    & \includegraphics[width=0.103\textwidth]{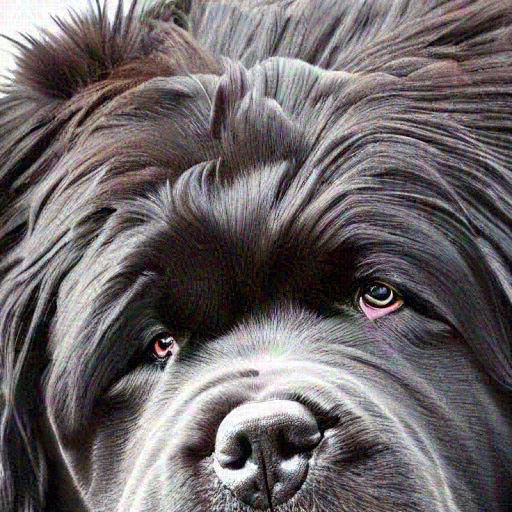}
    & \includegraphics[width=0.103\textwidth]{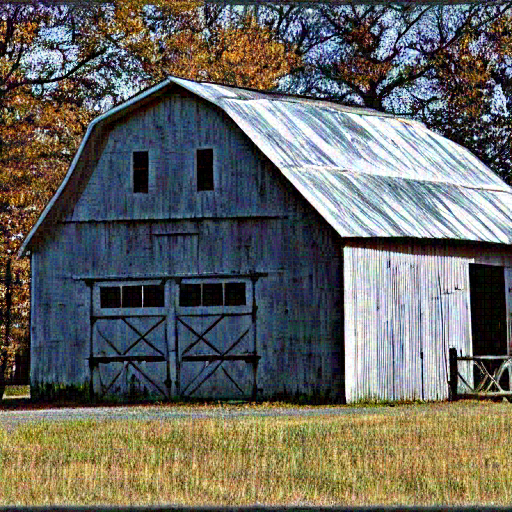}
    & \includegraphics[width=0.103\textwidth]{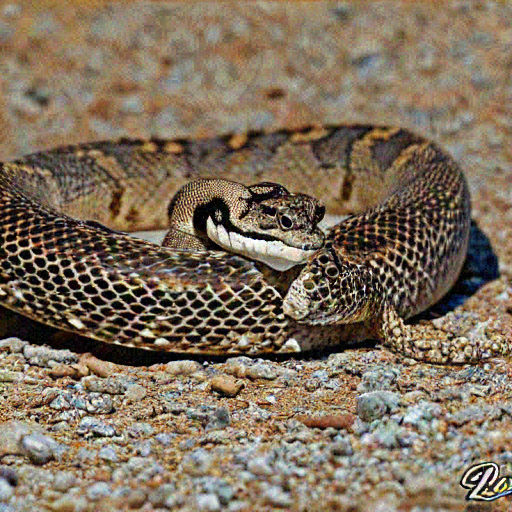}
    & \includegraphics[width=0.103\textwidth]{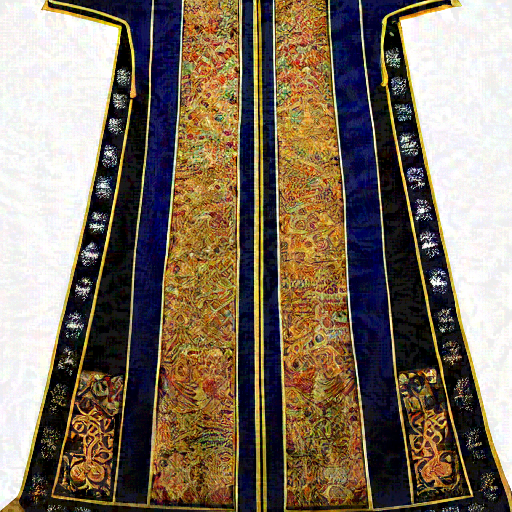}
    & \includegraphics[width=0.103\textwidth]{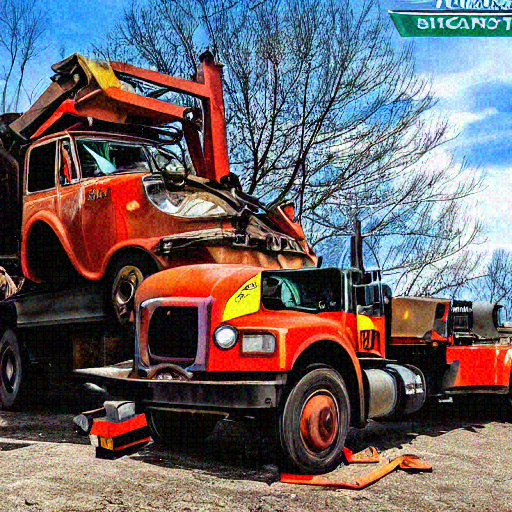}
    & \includegraphics[width=0.103\textwidth]{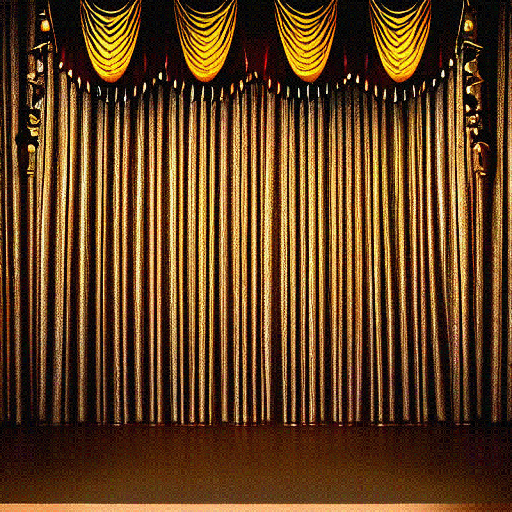}
    & \includegraphics[width=0.103\textwidth]{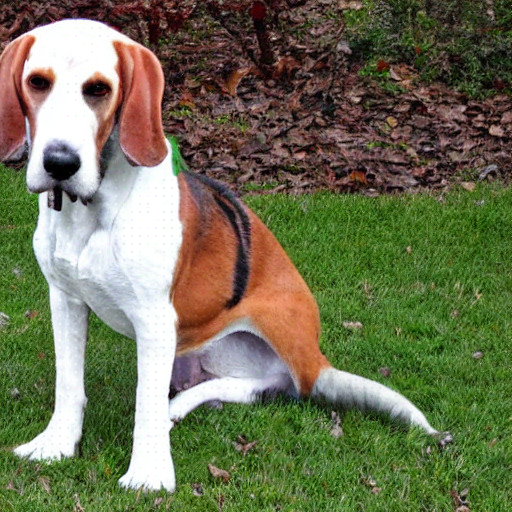}
    & \includegraphics[width=0.103\textwidth]{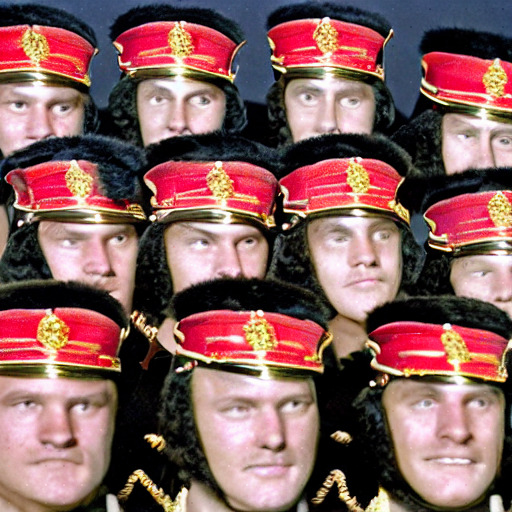}
  \end{tabular}
  \caption{Favorable ImageNet examples before and after publication-time
  hardening for all eight hashes, with eight images in each row. Each column
  is a matched original-hardened pair. The selected examples show attainable visual preservation rather than
  the complete distribution; aggregate results appear in
  Table~\ref{tab:image-hardening-difference}.}
  \label{fig:hardening-examples}
\end{figure*}

\subsection{Image Distortion under White-Box Attacks}

Table~\ref{tab:image-difference} reports SSIM, LPIPS, and FID between each
white-box attacked image and its corresponding original or hardened
reference. As the reported attack level increases from 40 to 180, SSIM
decreases sharply while LPIPS and FID increase for every hash and variant.
This is the visual-quality degradation trend discussed in the main paper.

Across most hashes and operating points, attacks against Ours remain closer
to their corresponding hardened references than attacks against Original
according to these metrics. These measurements are descriptive rather than
robustness guarantees; ASR and certified radius remain the primary
robustness metrics.

\begin{table*}[tb]
\centering

\small
\setlength{\tabcolsep}{2pt}
\renewcommand{\arraystretch}{1.04}
\begin{tabular*}{\textwidth}{@{\extracolsep{\fill}}llccccccccc}
\toprule
\multirow{2}{*}{Hash} & \multirow{2}{*}{Variant} & \multicolumn{3}{c}{Reported $\ell_2$ 40} & \multicolumn{3}{c}{Reported $\ell_2$ 90} & \multicolumn{3}{c}{Reported $\ell_2$ 180} \\
\cmidrule(lr){3-5}
\cmidrule(lr){6-8}
\cmidrule(lr){9-11}
& & SSIM & LPIPS & FID & SSIM & LPIPS & FID & SSIM & LPIPS & FID \\
\midrule
NeuralHash & Original & 0.625 & 0.209 & 22.3 & 0.330 & 0.545 & 56.6 & 0.171 & 0.919 & 103.3 \\
 & Ours & 0.737 & 0.075 & 8.3 & 0.397 & 0.324 & 30.2 & 0.196 & 0.701 & 70.5 \\
\addlinespace[2pt]
DINOHash & Original & 0.625 & 0.209 & 22.9 & 0.330 & 0.545 & 57.5 & 0.171 & 0.919 & 104.1 \\
 & Ours & 0.751 & 0.077 & 10.5 & 0.413 & 0.314 & 29.7 & 0.205 & 0.691 & 71.7 \\
\addlinespace[2pt]
HybridHash & Original & 0.625 & 0.209 & 22.0 & 0.330 & 0.545 & 56.7 & 0.171 & 0.919 & 103.3 \\
 & Ours & 0.742 & 0.069 & 8.1 & 0.401 & 0.309 & 29.0 & 0.198 & 0.686 & 72.2 \\
\addlinespace[2pt]
SSCD & Original & 0.625 & 0.209 & 23.1 & 0.330 & 0.545 & 57.0 & 0.171 & 0.919 & 103.3 \\
 & Ours & 0.755 & 0.084 & 15.0 & 0.417 & 0.374 & 44.6 & 0.206 & 0.783 & 82.7 \\
\addlinespace[2pt]
SimDINO & Original & 0.624 & 0.209 & 22.8 & 0.330 & 0.545 & 57.0 & 0.171 & 0.919 & 103.3 \\
 & Ours & 0.779 & 0.065 & 8.5 & 0.433 & 0.293 & 29.2 & 0.214 & 0.663 & 71.8 \\
\addlinespace[2pt]
ViT2Hash & Original & 0.624 & 0.209 & 21.2 & 0.330 & 0.545 & 56.3 & 0.171 & 0.919 & 103.3 \\
 & Ours & 0.788 & 0.057 & 7.9 & 0.445 & 0.281 & 33.9 & 0.218 & 0.650 & 82.3 \\
\addlinespace[2pt]
C-PDNA & Original & 0.616 & 0.224 & 25.6 & 0.329 & 0.549 & 57.9 & 0.171 & 0.920 & 103.7 \\
 & Ours & 0.636 & 0.142 & 13.5 & 0.337 & 0.460 & 43.1 & 0.173 & 0.846 & 91.5 \\
\addlinespace[2pt]
C-PDQ & Original & 0.624 & 0.211 & 22.0 & 0.330 & 0.546 & 56.8 & 0.171 & 0.919 & 103.4 \\
 & Ours & 0.626 & 0.195 & 18.4 & 0.329 & 0.531 & 52.4 & 0.170 & 0.907 & 98.3 \\
\bottomrule
\end{tabular*}
\caption{Visual similarity between white-box attacked images and their corresponding original or hardened references at the three reported attack levels. SSIM and LPIPS are averaged over paired images; FID compares the corresponding image sets. Higher SSIM and lower LPIPS/FID indicate better visual preservation.}
\label{tab:image-difference}
\end{table*}

\subsection{Ablation: Why Both Shields Matter}

The detailed results consistently show that matching-time randomized
smoothing and publication-time hardening are complementary. Across all
white-box attacks, Smoothing yields an average ASR of approximately 43.8\%,
whereas Ours reduces it to 11.8\%. For black-box attacks, the corresponding
average decreases from approximately 18.0\% with Smoothing to 1.3\% with
Ours. The average certified $\ell_2$ radius also increases from approximately
$0.252$ with Smoothing to $0.299$ with Ours.

Matching-time randomized smoothing stabilizes the match decision and enables
formal certification, while publication-time hardening improves the local
operating point of individual references, making the smoothed matcher
substantially harder to evade.

\end{document}